\documentclass{midl}


\usepackage{amsmath,amsfonts,bm}




\def\Figref#1{Figure~\ref{#1}}

\def\secref#1{§~\ref{#1}}



\def\eqref#1{Eq.~\ref{#1}}









\def\1{\bm{1}}

\newcommand{\valid}{\mathcal{D_{\mathrm{valid}}}}
\newcommand{\test}{\mathcal{D_{\mathrm{test}}}}










\DeclareMathAlphabet{\mathsfit}{\encodingdefault}{\sfdefault}{m}{sl}
\SetMathAlphabet{\mathsfit}{bold}{\encodingdefault}{\sfdefault}{bx}{n}











\newcommand{\E}{\mathbb{E}}



\jmlrvolume{-- 27}
\jmlryear{2026}
\jmlrworkshop{Full Paper -- MIDL 2026}
\editors{Accepted for publication at MIDL 2026}

\usepackage[utf8]{inputenc} 
\usepackage[T1]{fontenc}    
\usepackage{hyperref}       
\usepackage{url}            
\usepackage{booktabs}       
\usepackage{amsfonts}       
\usepackage{nicefrac}       
\usepackage{microtype}      
\usepackage{xcolor}         
\usepackage{xspace}         
\usepackage{amsmath}        
\usepackage{graphicx}       
\usepackage{multirow}    
\usepackage{amssymb}
\usepackage{pifont}
\newcommand{\xmark}{\ding{55}}
\newcommand{\cmark}{\ding{51}}

\makeatletter
\newtheorem{assumption}{Assumption}
\@ifundefined{lemma}{\newtheorem{lemma}{Lemma}}{}
\newcommand{\FairAUC}{\mathrm{FairAUC}}
\newcommand{\FairEnsemble}{\textsc{OxEnsemble}\xspace}
\newcommand{\qed}{$\hfill\blacksquare$}
\makeatother

\title{OxEnsemble: Fair Ensembles for Low-Data Classification}

%

\midlauthor{
\Name{Jonathan Rystrøm\orcid{0000-0003-1030-5839}\nametag{$^{1}$}} \\
\addr $^{1}$ Oxford Internet Institute, University of Oxford, Oxford, UK \AND
\Name{Zihao Fu\nametag{$^{2}$}} \\
\addr $^{2}$ The Chinese University of Hong Kong, Hong Kong, China \AND
\Name{Chris Russell\orcid{0000-0003-1665-1759}\nametag{$^{1}$}\Email{firstname.lastname@oii.ox.ac.uk}}\\
}

\begin{document}

\maketitle

\begin{abstract}
We address the problem of fair classification in settings where data is scarce and unbalanced across demographic groups. Such low-data regimes are common in domains like medical imaging, where false negatives can have fatal consequences. 

We propose a novel approach \emph{OxEnsemble} for efficiently training ensembles and enforcing fairness in these low-data regimes. Unlike other approaches, we aggregate predictions across ensemble members, each trained to satisfy fairness constraints. By construction, \emph{OxEnsemble} is both data-efficient -- carefully reusing held-out data to enforce fairness reliably -- and compute-efficient, requiring little more compute than used to fine-tune or evaluate an existing model. We validate this approach with new theoretical guarantees.
Experimentally, our approach yields more consistent outcomes and stronger fairness-accuracy trade-offs than existing methods across multiple challenging medical imaging classification datasets.
\end{abstract}

\section{Introduction}

Deep learning performs exceptionally well when trained on large-scale datasets \citep{dengImageNetLargescaleHierarchical2009,gaoPile800gbDataset2020,hendrycksMeasuringMassiveMultitask2020}, but its performance deteriorates in small-data regimes. This is especially problematic for marginalised groups, where labelled examples are both scarce and demographically imbalanced \citep{dignazioDataFeminism2023,larrazabalGenderImbalanceMedical2020}. In medical imaging, underrepresentation of minority groups leads to poor generalisation and higher uncertainty \citep{riccilaraUnravelingCalibrationBiases2023,mehtaEvaluatingFairnessDeep2024,jimenez-sanchezPictureMedicalImaging2025}.
As a result, the very groups most at risk of discrimination are also those for which deep learning methods work least well.

Existing fairness methods often fail in low-data settings \citep{pifferTacklingSmallData2024}. As data on disadvantaged groups is needed to learn effective representations \emph{and} to estimate group-specific bias, most methods underperform empirical risk minimisation \citep{zongMEDFAIRBenchmarkingFairness2022}.

Ensembles offer a natural way to address these challenges. By aggregating predictions across members, ensembles make more efficient use of scarce examples while leveraging disagreement between members for robustness \citep{theisenWhenAreEnsembles2023}. This makes ensembles particularly attractive for fairness in low-data regimes, but without theoretical foundations, improvements remain inconsistent \citep{koFAIRensembleWhenFairness2023,schweighoferDisparateBenefitsDeep2025}.

We address this by introducing \FairEnsemble: ensembles explicitly designed to enforce fairness constraints at the member level and provably preserve them at the ensemble level. Our theoretical results show when minimum rate and error-parity constraints are guaranteed to hold, and how much validation data is required to observe these guarantees in practice. Empirically, we demonstrate that \FairEnsemble outperforms strong baselines in medical imaging---where fairness is urgently needed but data for disadvantaged groups is limited.

We make three contributions:
\begin{enumerate}
    \item \textbf{Method:} We introduce an efficient ensemble framework of fair classifiers (\FairEnsemble) tailored to fairness in small image datasets.
    \item \textbf{Theory:} We prove that our fair ensembles are guaranteed to preserve fairness under both error-parity and minimum rate constraints, and we derive how much data is required to observe minimum rate guarantees in practice.
    \item \textbf{Results:} Across three medical imaging datasets, our method consistently outperforms existing baselines on fairness–accuracy trade-offs. 
\end{enumerate}

The article is organised as follows: \secref{sec:related} presents related work in low-data fairness and fairness in ensembles. \secref{sec:method} describes both how we construct and train the ensemble (\secref{sec:ensemble}) and the formal guarantees for when it works (\secref{sec:theory}). Finally, \secref{sec:analysis-experiments} and \secref{sec:results} provide empirical support for the benefits of fair ensembles versus strong baselines on challenging datasets.

\begin{figure}[!tbp]
  \centering
  \setlength{\tabcolsep}{3pt} 

  \subfigure[Comparison with related work.\label{fig:related-table}]{%
    \begin{minipage}{0.47\textwidth}
      \centering
      \scriptsize
      \renewcommand{\arraystretch}{1.02}
      \begin{tabular}{@{}p{1.9cm}ccccc@{}}
        \toprule
        \textbf{Paper} & \textbf{Deep} & \textbf{Img.} & \textbf{Interv.} & \textbf{Min.~Rates} & \textbf{Low-data} \\
        \midrule
        \citet{grgic-hlacaFairnessDiversityRandomness2017}      & \xmark & \xmark & \xmark & \xmark & \xmark \\
        \citet{bhaskaruniImprovingPredictionFairness2019} & \xmark & \xmark & \cmark & \xmark & \xmark \\
        \citet{goharUnderstandingFairnessIts2023}   & \xmark & \xmark & \xmark & \xmark & \xmark \\
        \citet{koFAIRensembleWhenFairness2023}           & \cmark & \cmark & \xmark & \xmark & \xmark \\
        \citet{claucichFairnessDeepEnsembles2025}      & \cmark & \cmark & \cmark & \xmark & \xmark \\
        \citet{schweighoferDisparateBenefitsDeep2025}         & \cmark & \cmark & \cmark & \xmark & \xmark \\
        \midrule
        \textbf{OxEnsemble}           & \cmark & \cmark & \cmark & \cmark & \cmark \\
        \bottomrule
      \end{tabular}
    \end{minipage}
  }\hfill
  \subfigure[\FairEnsemble pipeline\label{fig:ensemble-illustration}]{%
    \begin{minipage}{0.43\textwidth}
      \centering
      \includegraphics[width=\linewidth]{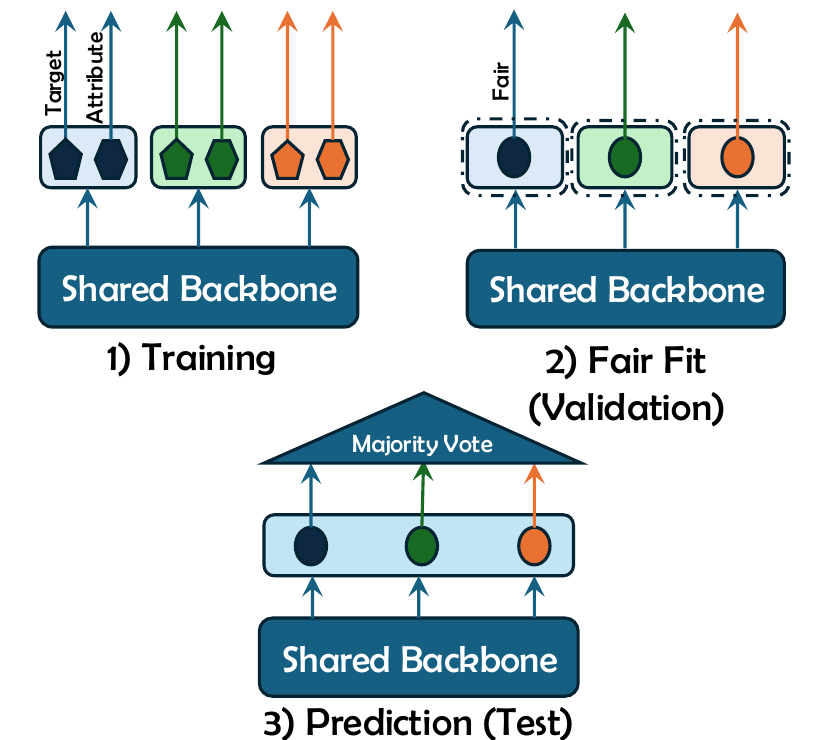}
    \end{minipage}
  }

  \caption{\textbf{(a) Comparisons.} We compare existing works on whether they study deep ensembles; have been applied to images; propose fairness interventions; enforce minimum rates, and target low-data regimes.
  \textbf{(b) \FairEnsemble pipeline.} \emph{Train (1):} Members share backbone and task + protected attributes. \emph{Validate (2):} Enforce fairness constraint while maximising accuracy. \emph{Predict (3):} Majority vote. Partitioning ensures full coverage; shared backbone improves efficiency, and voting provides guarantees.}
  \label{fig:table_and_figure}
\end{figure}

\section{Related Work} \label{sec:related}

\paragraph{Fairness Challenges in Low-Data Domains:}
Deep learning methods achieve near-human performance on overall metrics \citep{liuDeepLearningSystem2020}, yet consistently underperform for marginalised groups in medical imaging \citep{xuAddressingFairnessIssues2024,daneshjouDisparitiesDermatologyAI2022,seyyed-kalantariUnderdiagnosisBiasArtificial2021}. A central source of bias is unbalanced datasets \citep{larrazabalGenderImbalanceMedical2020}, where disadvantaged groups lack examples to learn reliable representations, leading to poor calibration and uncertain predictions \citep{riccilaraUnravelingCalibrationBiases2023,mehtaEvaluatingFairnessDeep2024,christodoulouConfidenceIntervalsUncovered2024}.  

Defining fairness is equally challenging. Standard parity-based metrics such as equal opportunity \citep{hardtEqualityOpportunitySupervised2016} can be satisfied trivially by constant classifiers in imbalanced datasets and often reduce performance for all groups, a phenomenon of ``levelling down'' with serious real-world consequences \citep{zhangImprovingFairnessChest2022,zietlowLevelingComputerVision2022,mittelstadtUnfairnessFairMachine2024}. In safety-critical domains such as medicine, \emph{minimum rate constraints}—which enforce a performance floor across groups—are often more appropriate to ensure that classifiers serve all subpopulations \citep{mittelstadtUnfairnessFairMachine2024}. For further works, see Appendix \ref{app:related}.
\paragraph{Fairness in Ensembles:}
Prior work has observed that ensembles sometimes improve fairness by boosting performance on disadvantaged groups \citep{koFAIRensembleWhenFairness2023,schweighoferDisparateBenefitsDeep2025,claucichFairnessDeepEnsembles2025,grgic-hlacaFairnessDiversityRandomness2017}. However, these studies are observational: improvements are not guaranteed, and in some cases ensembles can even worsen disparities \citep{schweighoferDisparateBenefitsDeep2025}. Our approach is interventionist. Building on theoretical results for ensemble competence \citep{theisenWhenAreEnsembles2023}, we extend their proofs to fairness settings. This allows us to show formally \emph{why and when} ensembles improve fairness, unlike prior works which only demonstrated that they sometimes do. See Table \ref{fig:related-table} for a complete comparison with related works.

\citet{schweighoferDisparateBenefitsDeep2025} proposed per-group thresholding \citep{hardtEqualityOpportunitySupervised2016} to enforce equal opportunity on an ensemble's output. This may not work for imaging tasks as it requires explicit group labels that are not part of images. It is also inappropriate for low-data regimes as it requires a large held-out test set to reliably correct for unfairness.

\section{Methods} \label{sec:method}
At its heart, this paper looks to circumvent a fundamental trade-off:

\emph{Held-out data must be used to reliably measure and remove bias \citep{zietlowLevelingComputerVision2022}, but holding back data reduces performance of the base classifier -- particularly on data-scarce minority groups.}

 We circumvent this trade-off through ensemble-based data reuse. Each member of the ensemble has its fairness enforced using held-out data. However, as this data changes from ensemble member to member, the ensemble as a whole has better generalisation than a single member. Novel theoretical results show that we can expect fairness at the member level to transfer to behaviour of the ensemble as a whole (see \secref{sec:theory}).
 
\paragraph{Choice of fairness constraints:}
We focus on two fairness constraints: \emph{equal opportunity} \citep[$EO_p$, the maximum difference in recall across groups;][]{hardtEqualityOpportunitySupervised2016} and \emph{minimum recall} \citep[the lowest recall of any group ;][]{mittelstadtUnfairnessFairMachine2024}. Both target false negatives, which is appropriate when missing a positive case (e.g., a deadly disease) is far more costly than overdiagnosis---a scenario that is especially relevant in medical imaging \citep{seyyed-kalantariUnderdiagnosisBiasArtificial2021}. Of the two measures, we believe \emph{minimum recall rates} to be more clinically relevant, while \emph{equal opportunity} is more common in the field. While we highlight these two constraints, our approach can be applied to any other fairness metrics supported by OxonFair \citep{delaneyOxonFairFlexibleToolkit2024}.

\subsection{Ensemble Construction and Training} \label{sec:ensemble}
We consider an ensemble of deep neural networks (DNNs) sharing a pretrained convolutional backbone (\Figref{fig:ensemble-illustration}). Each ensemble member is trained on a separate fold, stratified by both target label and group membership \citep{trStratifiedKfoldsCrossvalidation2023}. Training each member on different folds allows us to fully utilise the dataset, unlike standard fairness methods that require held-out validation data \citep{hardtEqualityOpportunitySupervised2016,delaneyOxonFairFlexibleToolkit2024}. Predictions are aggregated by majority voting, which enforces the guarantees of \citet{theisenWhenAreEnsembles2023} (see \secref{sec:theory}).

\paragraph{Enforcing the fairness of ensemble members:}
Each ensemble member is trained as a multi-headed classifier following OxonFair \citep{delaneyOxonFairFlexibleToolkit2024}. These heads predict both the task label (e.g., disease vs. no disease) and the protected attribute (i.e., group membership; see \Figref{fig:ensemble-illustration}, left). The task prediction head is trained with standard cross-entropy loss, while the group heads predict a one-hot encoding of the protected attribute using a squared loss. 

The two heads are combined using OxonFair's multi-head surgery. This procedure takes a weighted average of the heads and a constant classifier, with weights selected on a validation set to enforce fairness constraints (e.g. the difference in recall between groups is less than 2\% or the minimum recall over any group is more than 70\%) while maximising accuracy. This averaging process can be performed in place, resulting in a single fair classifier with the same architecture as the single-headed model that predicts the original task label. \citep[See][section 4.2 for details]{delaneyOxonFairFlexibleToolkit2024}.

This formulation allows any group fairness definition that can be expressed as a function of per-group confusion matrices to be optimized. Because weights are selected using held-out data, we can enforce error-based criteria---such as equal opportunity or minimum recall---even when the base model overfits during training. In practice, we enforce fairness per member using the held-out data of their corresponding fold, and we optimize over accuracy together with an experiment-specific fairness constraint: either minimum recall or equal opportunity. 


\paragraph{Efficient ensembling of deep networks:}
The main computational bottleneck in deep CNNs is the backbone. To avoid repeatedly running the same backbone for ensemble members, we concatenate all classifier heads on a shared backbone. During training, the loss is masked so only the relevant head is updated for each data point. When the backbone is pretrained and frozen,\footnote{Freezing the backbone helps prevent overfitting on small datasets.} this is equivalent to training each member independently while requiring only a single backbone pass. A related idea with backbone fine-tuning is described by \citet{chenGroupEnsembleLearning2020}. We use EfficientNetV2 \citep{tanEfficientNetV2SmallerModels2021} pretrained on ImageNet \citep{dengImageNetLargescaleHierarchical2009} as the backbone in all experiments. We show alternative, but qualitatively similar, results with MobileNetv3 in Appendix \ref{app:mobilenetv3} \citep{howardSearchingMobileNetV32019}.


This yields substantial efficiency gains. Inference speed is essentially identical to a single ERM model, while training is somewhat slower due to multiple heads, but still much faster than training all members separately (which would be about $M\times$ slower for an M-member ensemble). Appendix~\ref{app:efficiency} provides empirical comparisons for the efficiency gains (see Tables \ref{tab:single_image_latency} and \ref{tab:runtime_comparison}), and Appendix~\ref{app:implementation} gives implementation details. To ensure robustness, each experiment is repeated over three train/test splits.

\subsection{Formal Guarantees for Fairness} \label{sec:theory}
We now ask: under what conditions can ensembles be expected to \emph{guarantee} fairness improvements? As mentioned in \secref{sec:related}, most prior work on fairness in ensembles is observational, showing that ensembles sometimes improve fairness \citep[e.g.,]{claucichFairnessDeepEnsembles2025,koFAIRensembleWhenFairness2023}, while \citet{schweighoferDisparateBenefitsDeep2025} showed that fairness could be enforced on the output of an ensemble using standard postprocessing. In contrast, we take an interventionist approach and ask, \emph{after enforcing fairness per ensemble member, can we expect it to transfer to the ensemble as a whole?}. We provide theoretical conditions under which fairness is improved,and show how it can be used in practice.


The theory is based on \citet{theisenWhenAreEnsembles2023}, who show that \textit{competent} ensembles never hurt accuracy. Informally, an ensemble is competent over a distribution $D$ if it is more likely to be confidently right than confidently wrong. Let the error rate of an ensemble $\rho$ be:\footnote{For definitions of all notation used see Table \ref{tab:notation}.} 
\[
W_\rho = W_\rho(X,y) = \E_{h\sim \rho} [1(h(X) \neq y )]
\]
and define
\[
 C_\rho(t) = P_{(X,y)\sim D}(W_\rho \in [t, 1/2)) - P(W_\rho  \in [1/2, 1 - t]) \,\,\,\forall t\in [0,1/2)
\]
The ensemble is \emph{competent} if  $C_\rho(t) \geq 0$ for all $t\in[0,1/2)$. This definition makes no distributional assumptions and can be verified on held-out data. 

\citet{theisenWhenAreEnsembles2023} showed that if competence holds on a dataset $(X,y)$, then majority voting improves accuracy relative to a single classifier, with the improvement bounded by the disagreement between members.

To extend competence to fairness metrics, we evaluate competence on \emph{restricted subsets of the data}. Let $\mathcal{G}$ be the set of protected groups. For any group $g\in\mathcal{G}$, write $g{+}$ for the positives $(y=1, A=g)$ belonging to a group. We similarly write $D^+$ for the set of all positives in the distribution.
We define
\begin{equation}
\label{restricted_compitence}
C_{\rho}^{g+}(t) =P_{(X,y)\sim g+}(W\rho \in [t, 1/2)) - P_{(X,y)\sim g+}(W\rho  \in [1/2, 1 - t])
\end{equation}
We say an ensemble is \emph{restricted groupwise competent} if $C_{\rho}^{g+}(t)>0$ for all $t$, $g\in G$, and say it is \emph{restricted competent} if $C_{\rho}^{D^+}(t)>0$.

Based on this, we derive three main results:
\begin{enumerate}
    \item \textbf{Minimum rate constraints:} If an ensemble is restricted groupwise competent, and every member of the ensemble satisfies a minimum rate constraint, then the ensemble as a whole also satisfies that minimum rate. 
    \item \textbf{Error parity:} If an ensemble is restricted groupwise competent, and if every member of the ensemble approximately satisfies an error parity measure (e.g., equal opportunity), then the ensemble as a whole also approximately satisfies it. The achievable bounds depend on disagreement- and error rates of the members.
    \item \textbf{Restricted Groupwise Competence} can be enforced by appropriate minimum recall constraints. 
\end{enumerate}

Together these results show how ensemble competence on restricted subsets provides guarantees for both minimum rate constraints and error parity measures, covering a broad range of fairness definitions. Moreover, (iii) shows that the conditions required for the theorem to hold are exactly those enforced by setting minimum recall rates.

We begin with a lemma.
\begin{lemma}
\label{lemma}
Restricted competent ensembles do not degrade recall relative to the average recall of a member.
\end{lemma}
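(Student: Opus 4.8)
The plan is to translate the accuracy statement of \citet{theisenWhenAreEnsembles2023} into a recall statement by restricting attention to the positive-labelled subpopulation $D^+$ (or a groupwise positive slice $g{+}$). The key observation is that recall is, by definition, the accuracy of a classifier restricted to the positives: for any classifier $h$, $\mathrm{recall}(h) = P_{(X,y)\sim D^+}(h(X)=y) = 1 - \E_{(X,y)\sim D^+}[\1(h(X)\neq y)]$. Hence the quantity ``average recall of a member'' is exactly $1 - \E_{(X,y)\sim D^+}\E_{h\sim\rho}[\1(h(X)\neq y)] = 1 - \E_{(X,y)\sim D^+}[W_\rho]$, and the recall of the majority-vote ensemble is $1 - P_{(X,y)\sim D^+}(W_\rho \geq 1/2)$ (up to the usual tie-breaking convention). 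So the lemma reduces to showing $P_{(X,y)\sim D^+}(W_\rho \geq 1/2) \leq \E_{(X,y)\sim D^+}[W_\rho]$ whenever $C_\rho^{D^+}(t) > 0$ for all $t \in [0,1/2)$.

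First I would set $Z = W_\rho$ viewed as a random variable under $(X,y)\sim D^+$, taking values in $[0,1]$, and write the ensemble recall deficit as $P(Z \geq 1/2) - \E[Z]$ (I want to show this is $\leq 0$). The standard trick — and this is essentially the argument in \citet{theisenWhenAreEnsembles2023} — is to split $\E[Z]$ over the events $\{Z < 1/2\}$ and $\{Z \geq 1/2\}$, and then use a symmetrisation / pairing argument across the point $1/2$. Concretely, write $\E[Z] = \E[Z \1(Z<1/2)] + \E[Z\1(Z\geq 1/2)] \geq \E[Z\1(Z\geq 1/2)] \geq \tfrac12 P(Z \geq 1/2) + \big(\text{a correction term}\big)$, and then show the restricted-competence condition forces the mass and ``lever arm'' of $Z$ on $[1/2, 1]$ to be dominated by that on $[0,1/2)$. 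More carefully, I would integrate the competence inequality: since $C_\rho^{D^+}(t) = P(Z\in[t,1/2)) - P(Z\in[1/2,1-t]) \geq 0$ for all $t$, integrating over $t\in[0,1/2)$ (Fubini / layer-cake) converts the pointwise inequality into a comparison of the truncated first moments $\E[(\tfrac12 - Z)\1(Z<1/2)] \geq \E[(Z - \tfrac12)\1(Z \geq 1/2)]$, which rearranges exactly to $\E[Z] \leq 1 - P(Z\geq 1/2) + \tfrac12(\text{tie mass})$, i.e. the ensemble recall is at least the average member recall. The same argument applied to $g{+}$ instead of $D^+$ gives the groupwise version.

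The main obstacle I anticipate is handling the boundary case $W_\rho = 1/2$ (ties) cleanly: whether a tie is counted as an ensemble error affects the inequality direction, and the competence condition is stated with a half-open interval $[1/2, 1-t]$, so I need to be careful that the tie mass is allocated consistently on both sides of the translation ``recall $=$ accuracy on positives.'' I would adopt the same tie-breaking convention as \citet{theisenWhenAreEnsembles2023} (e.g., ties count as errors, or are broken by a fixed rule), state it explicitly, and check that the layer-cake integration of $C_\rho^{D^+}$ still yields the moment inequality in that convention — if strict competence $C_\rho^{D^+}(t) > 0$ holds, there is slack to absorb the tie term regardless. A secondary, purely bookkeeping obstacle is making the Fubini step rigorous when $Z$ may have atoms, but since $Z$ is bounded in $[0,1]$ this is routine. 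Once the moment inequality is in hand, the conclusion ``restricted competent ensembles do not degrade recall relative to the average recall of a member'' is immediate.
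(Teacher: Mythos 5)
Your proposal takes essentially the same route as the paper: reduce recall to accuracy on the positive slice $D^+$ (resp.\ $g+$), then invoke competence on that restricted distribution to conclude the majority vote is no worse than the average member. The paper stops there, simply citing the main theorem of \citet{theisenWhenAreEnsembles2023} (via the EIR/DER bound in Eq.~\ref{eq:original-theorem}) applied to $D^+$; you additionally sketch a re-derivation of that ingredient, which is fine in spirit but has the intermediate inequalities garbled. Integrating the competence condition over $t\in[0,1/2)$ gives $\E[Z\,\1(Z<1/2)] \geq \E[(1-Z)\,\1(Z\geq 1/2)]$ for $Z=W_\rho$ under $D^+$ (the left integrand $\1(t\leq Z<1/2)$ has measure $Z$, not $\tfrac12 - Z$), and adding $\E[Z\,\1(Z\geq 1/2)]$ to both sides yields $\E[Z]\geq P(Z\geq 1/2)$, i.e.\ ensemble recall $1-P(Z\geq1/2)$ is at least average member recall $1-\E[Z]$. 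Your stated truncated-moment comparison $\E[(\tfrac12-Z)\1(Z<1/2)]\geq\E[(Z-\tfrac12)\1(Z\geq1/2)]$ merely rearranges to $\E[Z]\leq\tfrac12$, and your final line $\E[Z]\leq 1-P(Z\geq 1/2)$ is a different (and unneeded) inequality; neither is what the lemma requires, so fix the bookkeeping or, as the paper does, just cite the theorem as a black box. Your attention to the tie mass at $W_\rho=1/2$ is a legitimate point that the paper glosses over.
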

\begin{proof}
Proof follows immediately by applying the main result of \citet{theisenWhenAreEnsembles2023} to $D^+$ rather than $D$, and observing that accuracy when restricted to the positives is equivalent to recall.\footnote{A similar argument can be made using the negatives and \emph{sensitivity}.} 

This main result bounds the \emph{Error Improvement Rate (EIR)}---the ensemble's relative improvement over a single classifier---by the \textit{Disagreement Error Ratio (DER)}. See Appendix \ref{app:formalism} for formal definitions. For binary classification, the bounds are given by Eq. \ref{eq:original-theorem} for an arbitrary data distribution, $\mathcal{D}$:
\begin{equation} \label{eq:original-theorem}
\text{DER}_\mathcal{D} \geq \text{EIR}_\mathcal{D} \geq \max(\text{DER}_\mathcal{D} - 1, 0)
\end{equation}
Replacing $D$ with $D^+$ implies the error improvement rate on the positives must be non-negative for a \emph{restricted competent} ensemble as required.
\end{proof}

\subsubsection{Restricted Groupwise Competence Guarantees}

\paragraph{1. Minimum rates for competent ensembles:}
We apply the result from lemma~\ref{lemma} to each group independently. We observe that if the ensemble is \emph{restricted groupwise competent}, the recall rate for each group can not degrade by ensembling. Therefore the minimum recall rate for any group, must also not be degraded. \qed

\paragraph{2. Error parity from competence:}
Error-parity constraints such as approximate equal opportunity \citep[equality of recall across groups;][]{hardtEqualityOpportunitySupervised2016} or approximate equality of accuracy \citep{zafarFairnessConstraintsFlexible2019} are harder to guarantee. The difficulty is that while ensembles can improve average performance, unequal improvements across groups can increase disparities \citep[see, e.g.,][]{schweighoferDisparateBenefitsDeep2025}. Nonetheless, \emph{restricted groupwise competence} still yields limited but useful bounds.

We consider the $L_\infty$ form of approximate fairness: a classifier has $k$-approximate fairness with respect to groups $\mathcal{G}$ if 
\begin{equation}
    k\geq \max_{g\in\mathcal{G}}L_g(h) - \min_{g\in\mathcal{G}}L_g(h)
\end{equation}
where $L_g$ is the average loss on group $g$, corresponding to $1$ minus one of the measures we are concerned with (typically recall).
The question then is, if every member of the ensemble exhibits $k$-approximate fairness, what fairness bounds do we have for the ensemble? 

By applying Eq. \ref{eq:original-theorem} (see Appendix~\ref{app:deo-derivation} for derivation), we obtain the following bound:
\begin{align} \label{eq:deo-bounds}
    k^*&\leq k + \max_{g\in\mathcal{G}}\mathbb{E}_{h\sim \rho} [L_g(h)]\text{DER}_{g^*}-\max(0,\min_{g\in\mathcal{G}}\mathbb{E}_{h\sim \rho} [L_g(h)](\text{DER}_{g^*}-1)) 
\end{align}
Both bounds are pessimistic. In practice, our approach works well for enforcing equal opportunity (see \secref{sec:results}). Still, two insights follow:
First, viewed through the governance lens of \emph{levelling down} \citep{mittelstadtUnfairnessFairMachine2024} these fairness violations are less concerning. Fairness was enforced per ensemble member, and presumably performance per group was set at an acceptable level. Any subsequent unfairness comes because groups are doing better than expected, rather than worse.
Second, the bound scales with $L_g$,and therefore the worst-case disparity shrinks as group losses decrease. In practice, this means that enforcing additional minimum rate constraints through our method can tighten the bounds.

\subsubsection{Guarantees for Minimum Recall} \label{sec:theory:min-recall}
The previous section showed that restricted groupwise-competent ensembles can improve minimum rates and fairness. In this section, we show how to ensure restricted groupwise competence by setting minimum recall rates.

Enforcing minimal recall rates for each ensemble member alters the decisions made. Looking at \eqref{restricted_compitence}, we observe that increasing the recall rate for all ensemble members over some group $g$ decreases the probability of error over the positives. As such, enforcing a sufficiently high recall rate can guarantee competence (i.e., perfect recall implies no errors and therefore competence).

In practice, identifying the smallest minimal recall rate that guarantees competence is an empirical question and requires a further held-out set to measure competence as a function of minimal recall.
Given the paucity of data, we are unable to do this. Instead, we prove that, for a minimum recall rate of more than $50\%$, competence is guaranteed for an ensemble where the members make independent errors. See appendix \ref{blah} for details. This result is consistent with \emph{Jury Theorems} \citep{condorcetEssaiLapplicationLanalyse1785,berendWhenCondorcetsJury1998,kanazawaBriefNoteFurther1998,pivatoEpistemicDemocracyCorrelated2017} that show that majority votes from mildly correlated voters with average accuracy $>0.5$ improve over individual voters, converging to perfect accuracy as ensemble size increases \citep{matteiAreEnsemblesGetting2025}. We emphasise that only the specific value of 0.5 depends on independence assumptions. The existence of \emph{some} threshold does not, and neither does the rest of the theory.

Similarly, when the minimum recall for every member falls below ($50\%$), independent ensembles are not restricted groupwise competent. We demonstrate this also holds empirically in Fig. \ref{fig:min-recall-competence}, where no group achieves competence when $k<0.5$ across two datasets (see \secref{sec:data}).

\begin{figure}
    \centering
    \includegraphics[width=\linewidth]{}
    \caption{\textbf{Competence Violations vs Recall.} Competence violations ($C_\rho$; 0=perfect) are high when recall<0.5 and stabilize at recall>0.5. \emph{Left}: Test set for fitting and evaluation. \emph{Right}: Validation set for fitting, test set for evaluation.}
    \label{fig:min-recall-competence}
\end{figure}

\subsubsection{Minimum validation and evaluation sizes} \label{sec:theory:sizes}
Under the assumption of independent errors, a minimum recall of $k > 0.5$ on the test set, guarantees that the ensemble will also have a minimum recall of $k$.
The challenge here is that recall constraints are imposed on validation data, and as we are dealing with very low-data groups, sometimes with $<100$ positive cases, the constraints need not generalise to test data.

To ensure these constraints generalise to test data, we want to determine the minimum recall, $P_{\min}$, required the on a validation set with $m$ positives in the minority group such that with a probability $\alpha$, the recall on an evaluation set with $n$ positives will be at least $50\%$. This guarantees that the minimum recall of the ensemble is greater than the average recall of each member.

We assume that validation and test sets are of known sizes, $m$ and $n$ respectively, and drawn from the same distribution. By drawing on the literature for one-sided hypothesis tests on Bernoulli distributions, we arrive at Eq. \ref{eq:minimum-observed-recall}. 
\begin{equation} \label{eq:minimum-observed-recall}
p_{\min} = 0.5 + \tfrac{1}{2}z_{1-\alpha}\sqrt{\tfrac{1}{m}+\tfrac{1}{n}}.
\end{equation}
Here  $z_{1-\alpha}$ is the z-score for significance $1-\alpha$. The primary implication of Eq. \ref{eq:minimum-observed-recall} is that larger $n$ decreases the need for high validation thresholds -- especially in small-data settings. For derivations see Appendix \ref{app:minimum-sizes}.
We find empirical support for our theoretical guarantees of fairness on positive samples in Appendix \ref{app:competence-validation}. Here, we show that as long as the minimum recall is enforced at a sufficiently high threshold, we observe restricted groupwise competence on the test set. 

This result is more generally applicable outside of fairness, and to ensure a classifier has a recall of more than $k$ with probability $\alpha$, on an unseen test set, the recall on a validation set should be set to more than

\[
p_{\min} = k + z_{1-\alpha}\sqrt{\,k(1-k)\Bigl(\tfrac{1}{m}+\tfrac{1}{n}\Bigr)}.
\]

See Appendix \ref{app:minimum-sizes} for more details.


\section{Experimental Setup} \label{sec:analysis-experiments}
\renewcommand{\arraystretch}{0.8}
\begin{table}[ht]
\footnotesize
\centering
\caption{Evaluation datasets. ``Min. Positives’’ is the number of \emph{positive} examples in the smallest group (bold). These small counts stress-test low-data fairness.}
\label{tab:datasets}
\begin{tabular}{llrl}
\toprule
\textbf{Dataset} & \textbf{Task} & \textbf{\# Min. Positives} & \textbf{Protected Attributes} \\
\midrule
HAM10000 & Skin cancer & 94 & Age (0-40, \textbf{40-60}, 60+) \\
Fitzpatrick17k & Dermatology & 60 & Skin type (I-IV, V, \textbf{VI}) \\
Harvard-FairVLMed & Glaucoma & 399 & Race (Asian, White, \textbf{Black}) \\
\bottomrule
\end{tabular}
\end{table}
\renewcommand{\arraystretch}{1.0}

\paragraph{Data and Protected Attributes} \label{sec:data}

We evaluate on three medical imaging datasets from MedFair \citep{zongMEDFAIRBenchmarkingFairness2022} and FairMedFM \citep{jinFairMedFMFairnessBenchmarking2024}---see Table \ref{tab:datasets}. Each task is a binary classification with image-only inputs (discarding all auxiliary features for fair comparison). 
For Fitzpatrick17k, the common binary split (I–III vs.\ IV–VI) can mask harms to the darkest tone (VI), which comprises only 0.4\% of positives. We instead separate out V and VI, grouping I–IV to preserve adequate support elsewhere.

\paragraph{Preprocessing and splits:}
Images are centre-cropped and resized to 224x224 \citep{dengImageNetLargescaleHierarchical2009} with random augmentations during training. Dataset-specific validation/test sizes follow \secref{sec:theory:sizes} to guarantee 70\% minimum observable recall. See Appendix \ref{app:implementation} for full details.

\paragraph{Evaluation Metrics:} \label{sec:evaluation}

Medical classification is a non-zero-sum game where ``levelling down''---reducing groups' performance to achieve parity---can have fatal consequences \citep{mittelstadtUnfairnessFairMachine2024}. The predominant harm is failing to diagnose ill people from disadvantaged groups, making \textit{minimum recall} a more appropriate metric than disparity-based measures such as equal opportunity. Moreover, with positive class incidence below 10\% for disadvantaged groups, a trivial all-negative classifier achieves high accuracy, and perfectly satisfies equal opportunity, while misclassifying all sick patients.

However, a key question when using \emph{minimum recall rates} is ``What should the rate be set to?'' Our position is that this is a deployment decision that must be made on a case-by-case basis. As such,  our primary metric, $\FairAUC$, summarizes the possible choices by averaging the best accuracy $a$ achievable for each minimum recall threshold $t \in T$:
\begin{equation} \label{eq:improvement}
\FairAUC = \frac{1}{|T|} \sum_{t \in T} \left( \max\limits_{(a,r) \in M, r \geq t} a \right)
\end{equation}
where $M$ are model configurations and $r$ is minimum recall. We evaluate over $T \in [0.5,1]$---the zone with theoretical guarantees (\secref{sec:theory}). Confidence intervals use 200 bootstrap samples at 95\%. For baselines without explicit thresholding, we generate Pareto frontiers by varying global thresholds on validation data. 

\paragraph{Baselines and Ensemble Settings:} \label{sec:baseline}
\begin{figure}[!htbp]
    \centering
    \subfigure[Fitzpatrick17k\label{fig:fitzpatrick-improvements}]{
        \includegraphics[width=0.48\textwidth]{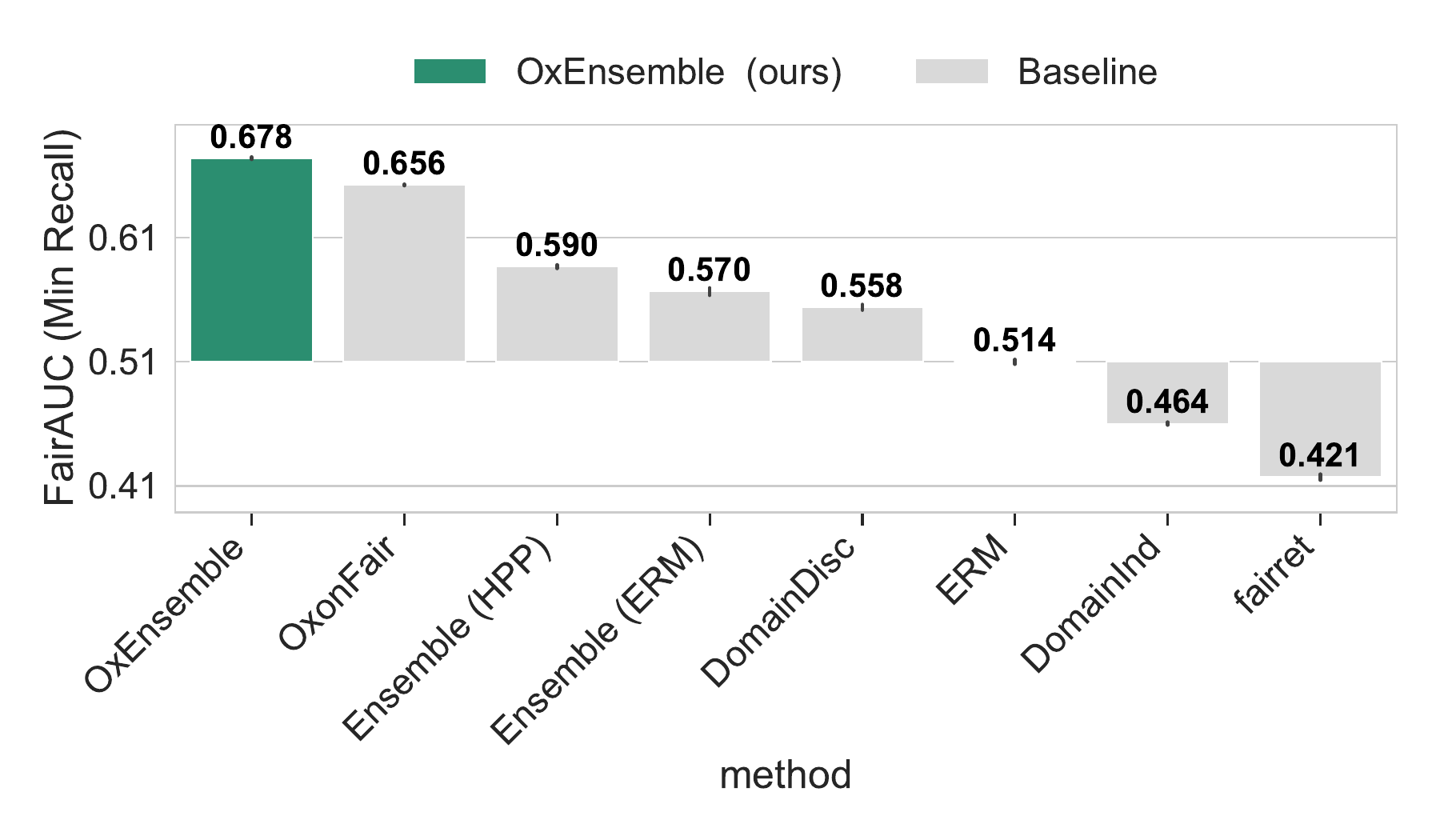}
    }\hfill
    \subfigure[HAM10000\label{fig:ham10000-improvements}]{
        \includegraphics[width=0.48\textwidth]{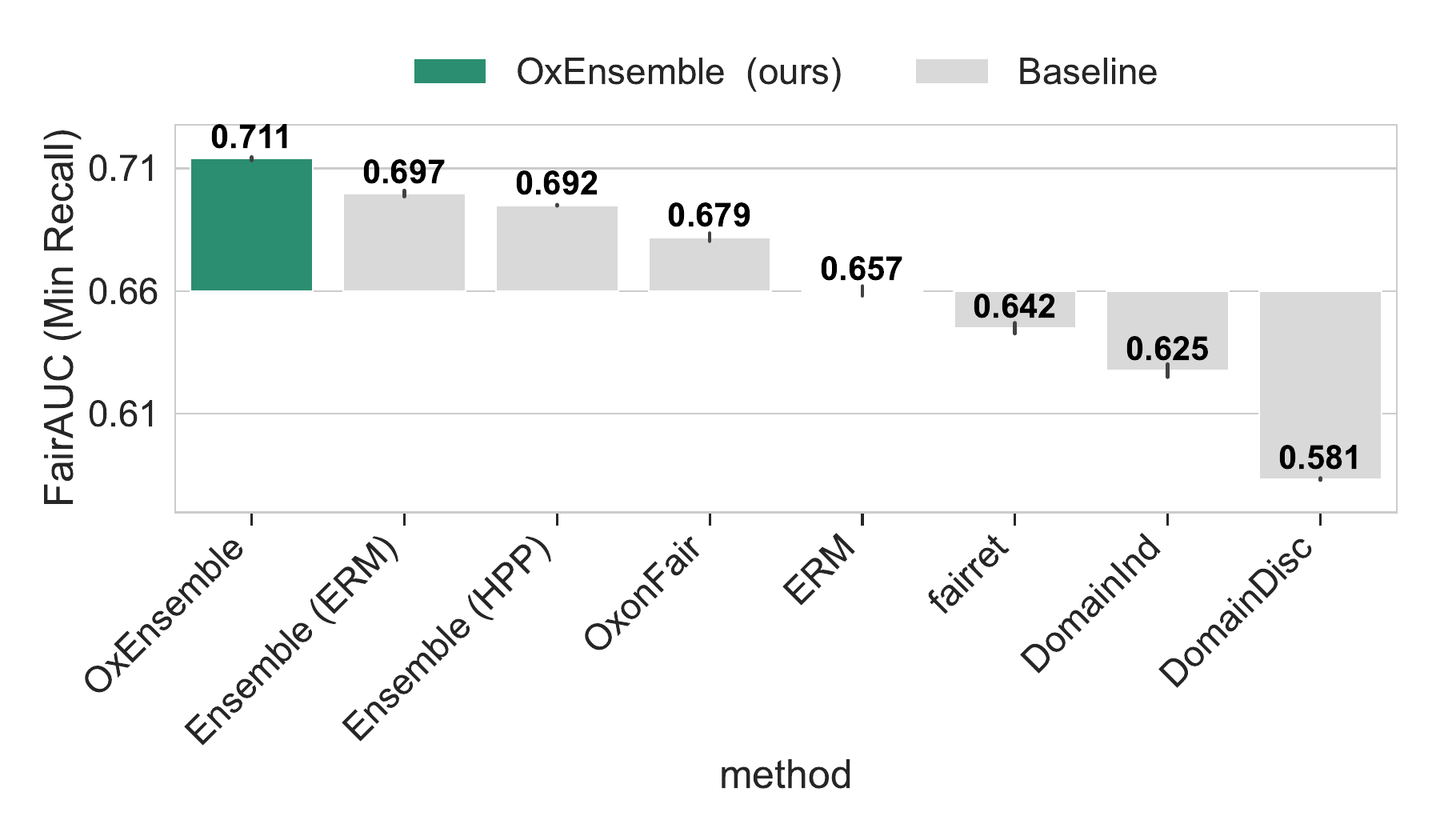}
    }
    \caption{\textbf{Fairness–accuracy AUC (FairAUC) relative to ERM.} \FairEnsemble achieves higher FairAUC than all baselines on Fitzpatrick17k (left) and HAM10000 (right). Error bars show 95\% bootstrap CIs. Evaluation follows \secref{sec:evaluation} over minimum-recall thresholds in $[0.5,1]$.}
    \label{fig:combined-improvements}
\end{figure}
We compare against established fairness methods to ensure a meaningful contribution. As a reference, \textbf{Empirical Risk Minimisation (ERM)} minimises training error without considering fairness \citep{vapnikNatureStatisticalLearning2000}. We include \textbf{Domain-Independent Learning}, which trains a separate classifier for each protected class with a shared backbone, and \textbf{Domain-Discriminative Learning}, which encodes protected attributes during training and removes them at inference \citep{wangFairnessVisualRecognition2020}. \textbf{Fairret} introduces a regularisation term accounting for the protected attribute and fairness criterion \citep{buylFairretFrameworkDifferentiable2023}, while \textbf{OxonFair} tunes decision thresholds on validation data to enforce group-level fairness \citep{delaneyOxonFairFlexibleToolkit2024}. \textbf{Ensemble (HPP)} implements a homogenous ensemble \citep[similar to][]{koFAIRensembleWhenFairness2023} followed by Hardt Post Processing \citep{hardtEqualityOpportunitySupervised2016} as proposed by \citet{schweighoferDisparateBenefitsDeep2025}. Finally, \textbf{Ensemble (ERM)} is equivalent to our method without enforced constraints, serving as an ablation to assess whether the added fairness interventions of \FairEnsemble increases $\FairAUC$.

All baselines are trained with the same configuration as our ensembles. Minority groups are rebalanced via upsampling as suggested by \citet{claucichFairnessDeepEnsembles2025}, and we reimplement methods following \citet{zongMEDFAIRBenchmarkingFairness2022} and \citet{delaneyOxonFairFlexibleToolkit2024}. Fairret requires a hyperparameter search over regularisation weights. To generate comparable Pareto frontiers, we fit global prediction thresholds so that a minimum recall of $k$ is enforced on a held-out validation set, mirroring deployment where thresholds are tuned on available data but applied to unseen test data \citep{kamiranQuantifyingExplainableDiscrimination2013}. For minimum recall experiments, for all methods that do not naturally support minimum recall rates, we select a global threshold that maximises accuracy while achieving a recall $>k$ on the validation set.

\begin{table}[t]
\centering
\small
\setlength{\tabcolsep}{8pt}
\caption{Single-image inference. Details in Appendix \ref{app:efficiency}.}
\begin{tabular}{l rr}
\toprule
& \multicolumn{2}{c}{Latency (ms) $\downarrow$} \\
\cmidrule(lr){2-3}
Method & CPU & CUDA \\
\midrule
ERM & 112.22 $\pm$ 13.58 & 5.42 $\pm$ 0.31 \\
Ensemble & 107.15 $\pm$ 12.41 & 5.83 $\pm$ 0.38 \\
\bottomrule
\end{tabular}

\label{tab:single_image_latency}
\end{table}

\paragraph{Ensemble size:}
We use 21 members for all ensembles. Appendix~\ref{app:ensemble-size} shows that $\FairAUC$ is stable across different sizes from 3 to 21 within confidence intervals. We default to 21: it is consistent with our theory that majority voting benefits from more members, while our shared-backbone design keeps inference time essentially unchanged (see Table~\ref{tab:single_image_latency} for efficiency comparisons).

\section{Results} \label{sec:results}
See Appendix \ref{app:mobilenetv3} for similar results with an alternative backbone.
\begin{table}[h]
\footnotesize
\centering
\caption{Accuracy and fairness violations. Best value in \textbf{bold}.}
\begin{tabular}{c|cc|cc}
\toprule
\multirow{2}{*}{Dataset} & \multicolumn{2}{c|}{Accuracy ↑} & \multicolumn{2}{c}{Fairness Violations ↓} \\
& \FairEnsemble & OxonFair & \FairEnsemble & OxonFair \\
\midrule
FairVLMed      & \textbf{0.665} & 0.657 & \textbf{0.009} & 0.011 \\
Fitzpatrick17K & \textbf{0.642} & 0.623 & 0.057 & \textbf{0.048} \\
HAM10000       & \textbf{0.707} & 0.679 & \textbf{0.067} & 0.082 \\
\bottomrule
\end{tabular}
\label{tab:ensemble_comparison}
\end{table}

\paragraph{FairVLMed:}

In \Figref{fig:frontiers} (right), only \FairEnsemble maintains fairness at strict thresholds ($\text{EqualOpportunity}<4\%$). Most methods break down above 6\%. Compared to OxonFair, \FairEnsemble achieves higher accuracy with lower fairness violations (Table~\ref{tab:ensemble_comparison}). While standard ensembles have slightly higher accuracy, \FairEnsemble consistently reduces disparities further (e.g., equal opportunity from 6\% to $<5\%$ with $<1$pp accuracy loss). The HPP-based method from \citet{schweighoferDisparateBenefitsDeep2025} fails to enforce equal opportunity ($\text{EO}_p>11\%$).

\paragraph{Fitzpatrick17k:} \label{sec:fitz-results}
\begin{figure}
    \centering
    \includegraphics[width=\linewidth]{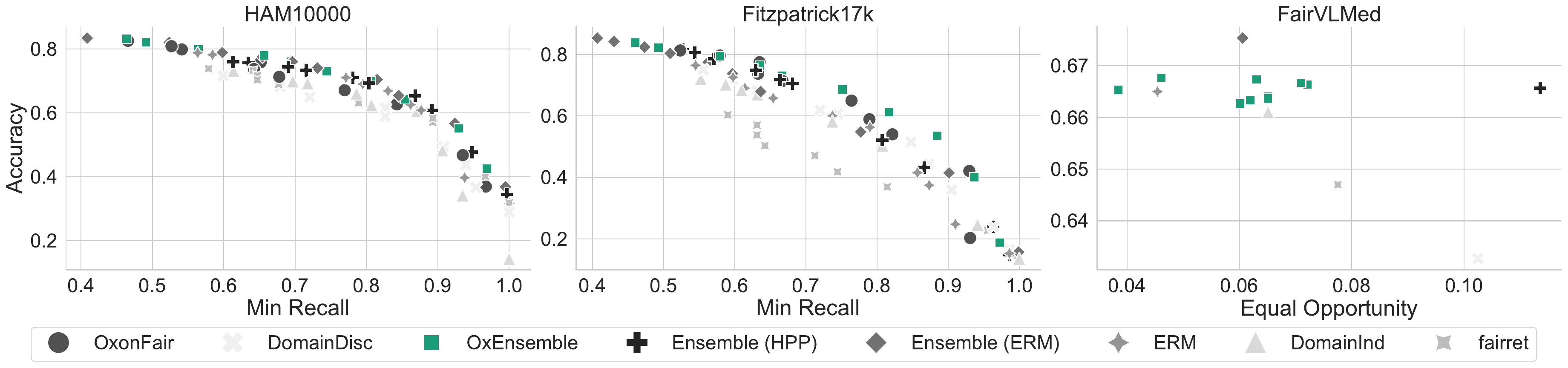}
    \caption{\textbf{Pareto frontiers across datasets.} \FairEnsemble (green) yields better fairness–accuracy trade-offs than baselines (grey). Left/centre: min recall (HAM10000, Fitzpatrick17k). Right: equal opportunity (FairVLMed). See \secref{sec:evaluation} for definitions.}

    \label{fig:frontiers}
\end{figure}
Here, in the most challenging setting (60 positive samples in the smallest group), \FairEnsemble clearly outperforms all baselines. It reaches $\FairAUC=67.7\%$, compared to 57.0\% for standard ensembles (58.9\% with HPP) and 51.3\% for ERM (\Figref{fig:fitzpatrick-improvements}). Across thresholds, \FairEnsemble is Pareto-optimal (\Figref{fig:frontiers}, centre).

\paragraph{HAM10000:} \label{sec:ham-results}
\FairEnsemble achieves the highest accuracy and lowest fairness violations. Its $\FairAUC=71.1\%$ significantly outperforms ERM (65.7\%), baseline ensembles (69.8\% \& 69.2\%), and OxonFair (67.9\%). All other methods perform worse than ERM.

\section{Conclusion} \label{sec:conclusion}
A lack of data for minority groups remains one of the fundamental challenges in ensuring equitable outcomes for disadvantaged groups.
We present a novel framework for constructing efficient ensembles of fair classifiers that address the challenge of enforcing fairness in these low-data settings. Across three medical imaging datasets, our method consistently outperforms existing fairness interventions on fairness-accuracy trade-offs. Unlike prior work on ensembles that observed occasional fairness improvements, our approach guarantees that fairness is not degraded and shows that ensembles are a practical tool for reusing scarce data to produce more reliable fairness estimates.

Our theoretical analysis explains \textit{why} these improvements occur. We prove that enforcing minimum rate constraints above $0.5$ ensures ensemble competence for the worst-performing groups, derive bounds for error-parity measures such as equal opportunity, and provide principled guidance on the validation and test sizes needed for these guarantees to hold in practice. Together, these results expand the understanding of both when and why ensembles improve fairness, offering a principled and empirically validated method for building more equitable classifiers in high-stakes domains. Code can be found on \href{https://github.com/jhrystrom/guaranteed-fair-ensemble}{GitHub}.

\bibliography{midl26_27}


\appendix

\section{Implementation Details} \label{app:implementation}
The code and instructions for reproducing the results can be found in our GitHub repository\footnote{Link: \url{https://github.com/jhrystrom/guaranteed-fair-ensemble}}. Optimisation for all models is done using Adam \citep{kingmaAdamMethodStochastic2015}  with a learning rate of 0.0001. 

The test splits for the baseline methods (see \secref{sec:baseline}) used the same seed as the first ensemble run. All experiments were run with deterministic seeds for reproducibility (see repository). 

To choose the sizes of the validation and test sets, we use the theory described in \secref{sec:theory:sizes}. Applying a minimum observable recall of 70\%, we obtain the following sizes. These were applied consistently across all methods.

\begin{itemize}
    \item \textbf{Fitzpatrick17K}: $|\valid|=33\%,|\test|=25\%$
    \item \textbf{HAM10000}: $|\valid|=20\%,|\test|=20\%$
    \item \textbf{FairVLMed}: $|\valid|=10\%,|\test|=10\%$
\end{itemize}

For fairret, we evaluate a set of regularisation parameters ranging from 0.5 to 1.5, including [0.5, 0.75, 1.0, 1.25, 1.5]. While \citet{buylFairretFrameworkDifferentiable2023} technically doesn't require a validation set, it makes use of a hyperparameter to govern the fairness/accuracy trade-off. This hyperparameter cannot be set a priori, and must be tuned for every dataset, requiring the use of validation data. We do not conduct any additional parameter search for Domain Discriminative, ERM, or Domain Independent. 

All training was done on a single H100. For the final results of the paper, we ran analysis on 3 datasets for 3 iterations using Weights \& Biases \citep{wandb}. Each run took \~11 minutes. In addition, the baseline experiments add an extra 20 runs. In total, this results in approximately 14.5 hours of compute to reproduce the complete results. Note that the experiments could have been run on cheaper hardware since the EfficientNetV2 models only have 43M parameters. 

While the above details the compute used to produce the results from the paper, we conducted further experiments before this. Particularly, we experimented with a less efficient ensemble structure requiring a separate run for each ensemble member. This required significantly more compute time.

\section{Data Access and Information} \label{app:data-access}
We provide links for accessing the data in Table \ref{tab:data-acces}. While all data is openly available for academic research, some of it requires approval by the providers. 

For detailed summary statistics for HAM10000 and Fitzpatrick17k, see the supplemental material in MedFair \citep{zongMEDFAIRBenchmarkingFairness2022}. For FairVLMed, we refer to the FairCLIP paper \citep{luoFairCLIPHarnessingFairness2024} as well as the GitHub page. For further details, see the original publications.

\begin{table}[ht]
\centering
\caption{Dataset access information}
\begin{tabular}{p{2.5cm}p{5cm}p{2.5cm}}
\toprule
\textbf{Dataset} & \textbf{Access URL} & \textbf{Reference} \\
\midrule
Fitzpatrick17k & \url{https://github.com/mattgroh/fitzpatrick17k} & \citep{grohEvaluatingDeepNeural2021} \\
HAM10000 & \url{https://dataverse.harvard.edu/dataset.xhtml?persistentId=doi:10.7910/DVN/DBW86T} & \citep{tschandlHAM10000DatasetLarge2018} \\
FairVLMed & \url{https://github.com/Harvard-Ophthalmology-AI-Lab/FairCLIP} & \citep{luoFairCLIPHarnessingFairness2024} \\
\bottomrule
\label{tab:data-acces}
\end{tabular}
\end{table}

\section{Theoretical formalisms} \label{app:formalism}
\begin{table}[ht]
\centering
\caption{Summary of notation used in \secref{sec:theory}.}
\label{tab:notation}
\small
\begin{tabular}{ll}
\toprule
\textbf{Symbol} & \textbf{Definition} \\
\midrule
$\mathcal{D}$ & Data distribution over $(X,Y)$ \\
$X$ & Input features \\
$Y \in \{0,1\}$ & Binary label (1 = positive, 0 = negative) \\
$A \in \mathcal{G}$ & Protected attribute; $\mathcal{G}$ is the set of groups \\
$g \in \mathcal{G}$ & A particular protected group \\
$\mathcal{D}_{g,+}$, $\mathcal{D}_{g,-}$ & Conditional distributions $\mathcal{D}\!\mid\!(A=g, Y=1)$ and $\mathcal{D}\!\mid\!(A=g, Y=0)$ \\
$g{+}$, $g{-}$ & Shorthand for positives $(A=g,Y=1)$ and negatives $(A=g,Y=0)$ \\
\midrule
$h$ & Individual classifier (ensemble member) \\
$h'$ & Another (distinct) ensemble member \\
$\rho$ & Distribution over ensemble members (uniform in practice) \\
$h_{\mathrm{MV}}$ & Majority-vote classifier induced by $\rho$ \\
$N$ & Ensemble size (number of members) \\
\midrule
$L_{\mathcal{D}}(h)$ & Error rate (0–1 loss) of $h$ on $\mathcal{D}$ \\
$L_g(h)$ & Groupwise loss on group $g$ (e.g., $1-\text{recall}$ or $1-\text{accuracy}$) \\
$D_{\mathcal{D}}(h,h')$ & Disagreement rate between $h$ and $h'$ on $\mathcal{D}$ \\
$W_\rho(X,Y)$ & Ensemble error rate on $\mathcal{D}$: $\mathbb{E}_{h\sim\rho}[\1\{h(X)\neq Y\}]$ \\
$W^{g+}_\rho$ & Ensemble error rate on positives in group $g$ (i.e., on $\mathcal{D}_{g,+}$) \\
$W^{g-}_\rho$ & Ensemble error rate on negatives in group $g$ (i.e., on $\mathcal{D}_{g,-}$) \\
\midrule
$t \in [0,1/2]$ & Margin parameter in competence definitions \\
$C_\rho$ & Competence on $\mathcal{D}$: $P(W_\rho\!\in\![t,1/2)) - P(W_\rho\!\in\![1/2,1-t])$ \\
$C^{g+}_\rho$ & Restricted groupwise competence on $g{+}$ (analogously $C^{g-}_\rho$ for $g{-}$) \\
\midrule
$\text{EIR}_{\mathcal{D}}$ & Error Improvement Rate: $\frac{\mathbb{E}_{h\sim\rho}[L_{\mathcal{D}}(h)] - L_{\mathcal{D}}(h_{\mathrm{MV}})}{\mathbb{E}_{h\sim\rho}[L_{\mathcal{D}}(h)]}$ \\
$\text{DER}_{\mathcal{D}}$ & Disagreement–Error Ratio: $\frac{\mathbb{E}_{h,h'\sim\rho}[D_{\mathcal{D}}(h,h')]}{\mathbb{E}_{h\sim\rho}[L_{\mathcal{D}}(h)]}$ \\
$g^*$ & Index for the distribution on which DER/EIR are computed (e.g., $g{+}$, $g{-}$, or full) \\
$k$ & Minimum rate constraint (e.g., minimum recall/sensitivity) \\
$k^*$ & Upper bound on ensemble fairness gap under error-parity bounds \\
\midrule
$K$ & Number of positive predictions among $N$ members for a datapoint \\
$K_i$ & Bernoulli indicator of the $i$-th member’s positive prediction \\
$p_i$ & Success prob.\ of $K_i$; $p_i = k + \delta$ under enforced minimum rate \\
$\bar p$ & Mean recall across members: $\bar p = \frac{1}{N}\sum_{i=1}^N p_i$ \\
$\delta \ge 0$ & Margin by which enforced minimum rate exceeds $k$ on validation \\
\midrule
$m,n$ & \# positives in validation/test for the minority group (for power analysis) \\
$\alpha$ & Significance level in the one-sided test \\
$z_{1-\alpha}$ & $(1-\alpha)$-quantile of the standard normal distribution \\
$p_{\min}$ & Minimum observed validation recall to ensure test-time recall $>0.5$: \\
& $p_{\min} = 0.5 + \tfrac{1}{2} z_{1-\alpha}\sqrt{\tfrac{1}{m}+\tfrac{1}{n}}$ \\
\bottomrule
\end{tabular}
\end{table}

Table \ref{tab:notation} defines all notation used in the main paper.

As mentioned in the main paper, \cite{theisenWhenAreEnsembles2023} bound the improvements of an ensemble (i.e., the \textit{Ensemble Improvement Ratio (EIR)}) by the \textit{Disagreement-Error Ratio (DER)} of the ensemble, i.e., the ratio of the average pairwise disagreement rate to the average error of ensemble members. 

For completeness, we repeat their major results below. Note that while \cite{theisenWhenAreEnsembles2023} considers a fixed distribution $\mathcal{D}=(X,Y)$, which they frequently drop from their notation, we preserve it as we will want to vary $\mathcal{D}$.

Their results are as follows: 

The ensemble improvement rate is defined as:
\begin{equation} \label{eq:eir-definition}
    \text{EIR}_{\mathcal{D}} = \frac{\mathbb{E}_{h \sim \rho}[L_{\mathcal{D}}(h)] - L_{\mathcal{D}}(h_{\text{MV}})}{\mathbb{E}_{h \sim \rho}[L_{\mathcal{D}}(h)]}.
\end{equation}

and the Disagreement-Error Ratio as:
\begin{equation}
  \text{DER}_{\mathcal{D}} = \frac{\mathbb{E}_{h, h' \sim \rho}[D_{\mathcal{D}}(h, h')]}{\mathbb{E}_{h \sim \rho}[L_{\mathcal{D}}(h)]}.
\end{equation}

Where $L_\mathcal{D}(h)$ is the error rate for classifier, $h$, on data distribution, $\mathcal{D}$, $h_\text{MV}$ is the majority vote classifier,   $\mathbb{E}_{h \sim \rho}$ indicates the expected value over all ensemble members, and $\text{D}_\mathcal{D}(h, h')$ is the disagreement rate between classifiers, $h$ and $h'$. 

Specifically, the authors provide upper and lower bounds on the EIR. Crucially, this rests on an assumption of \textit{competence}, which informally states that ensembles should always be at least as good as the average member. More formally, \cite{theisenWhenAreEnsembles2023} state:

\begin{assumption}[Competence]
Let $W_{\rho,\mathcal{D}} \equiv W_{\rho}(X,Y) = \mathbb{E}_{h \sim \rho,\mathcal{D}}[\1(h(X) \neq Y)]$. The ensemble $\rho$ is \emph{competent} if for every $0 \leq t \leq 1/2$,
\begin{equation}
\mathbb{P}(W_{\rho,\mathcal{D}} \in [t, 1/2)) \geq \mathbb{P}(W_{\rho,\mathcal{D}} \in [1/2, 1-t]).
\end{equation}
\end{assumption}

This assumption can be interpreted as formalising the statement that a majority voting ensemble is more likely to be confidently right than confidently wrong. 

Based on this assumption, \cite{theisenWhenAreEnsembles2023} prove the following theorem: 

\begin{theorem}
    Competent ensembles never hurt performance, i.e., $\text{EIR} \geq 0$.
\end{theorem}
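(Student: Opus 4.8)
The theorem to prove is that a competent ensemble satisfies $\text{EIR}_{\mathcal{D}} \geq 0$, i.e., the majority-vote classifier has error no larger than the average member error: $L_{\mathcal{D}}(h_{\text{MV}}) \leq \mathbb{E}_{h\sim\rho}[L_{\mathcal{D}}(h)]$. The natural strategy is to rewrite both sides as integrals against the distribution of the scalar random variable $W_{\rho,\mathcal{D}} = W_\rho(X,Y) \in [0,1]$, which records the fraction of members that err on a given point. The key observation is that $h_{\text{MV}}$ errs on $(X,Y)$ exactly when a (weighted) majority of members err, i.e., essentially when $W_\rho(X,Y) > 1/2$ (with the boundary case $W_\rho = 1/2$ handled by the tie-breaking convention, which at worst contributes an extra $\tfrac12 P(W_\rho = 1/2)$ that I will carry along or absorb into the $[1/2, 1-t]$ bracket). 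So $L_{\mathcal{D}}(h_{\text{MV}}) = P(W_\rho > 1/2)$ (modulo ties), whereas $\mathbb{E}_{h\sim\rho}[L_{\mathcal{D}}(h)] = \mathbb{E}[W_\rho]$ by Fubini/linearity of expectation.

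First I would establish these two identities cleanly. Then the task reduces to showing $\mathbb{E}[W_\rho] \geq P(W_\rho > 1/2)$. Second, I would write $\mathbb{E}[W_\rho] = \int_0^1 P(W_\rho \geq s)\,ds$ (layer-cake formula) and split the integral at $1/2$: $\mathbb{E}[W_\rho] = \int_0^{1/2} P(W_\rho \geq s)\,ds + \int_{1/2}^1 P(W_\rho \geq s)\,ds$. The second piece is already $\geq \int_{1/2}^1 P(W_\rho > 1/2)\,ds = \tfrac12 P(W_\rho > 1/2)$, so it suffices to show the first piece, $\int_0^{1/2} P(W_\rho \geq s)\,ds$, is also $\geq \tfrac12 P(W_\rho > 1/2)$. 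This is exactly where competence enters: substituting $t = s$ (or $t=1-s$) and changing variables, competence says $P(W_\rho \in [t,1/2)) \geq P(W_\rho \in [1/2, 1-t])$ for all $t \in [0,1/2]$, and integrating this inequality over $t \in [0,1/2]$ against $ds$ — together with the identity $\int_0^{1/2} P(W_\rho \geq s)\,ds \geq \int_0^{1/2} P(W_\rho \in [s,1/2))\,ds + \tfrac12 P(W_\rho \geq 1/2)$ — lets one dominate the missing $\tfrac12 P(W_\rho > 1/2)$ term by the lower tail mass. I would set up the bookkeeping so that the $\int_{1/2}^1 P(W_\rho \geq s)\,ds$ piece reconstructs $\int_0^{1/2} P(W_\rho \in [1/2, 1-t])\,dt$, which competence then bounds by $\int_0^{1/2} P(W_\rho \in [t,1/2))\,dt$, leaving the remainder to cancel.

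The main obstacle is not conceptual but bookkeeping: getting the endpoint/tie-breaking conventions consistent (the event $\{W_\rho = 1/2\}$, the half-open versus closed brackets in the competence definition, and whether $h_{\text{MV}}$ resolves a tie correctly or adversarially) so that the two integral decompositions line up term-for-term after the change of variables $s \leftrightarrow 1-t$. A clean way to avoid boundary headaches is to first prove it assuming $P(W_\rho = 1/2) = 0$ (e.g., odd ensemble size, or generic weights), obtaining $\mathbb{E}[W_\rho] \geq P(W_\rho \geq 1/2) = L_{\mathcal{D}}(h_{\text{MV}})$ directly, and then remark that the tie case only helps (a tie contributes $1/2$ to $W_\rho$ but at most $1$ to the majority-vote loss, and at most $1/2$ under fair tie-breaking, so the inequality is preserved or improved). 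I would also double-check that "$L_{\mathcal{D}}$" here is genuinely the $0$–$1$ loss so that $\mathbb{E}_{h\sim\rho}[\1\{h(X)\neq Y\}] = W_\rho$ holds pointwise; for proper scoring losses the identity $\mathbb{E}[W_\rho] = \mathbb{E}_h[L(h)]$ would fail and the argument would need the loss to be an average of member $0$–$1$ losses, which is exactly the regime this theorem (via \citet{theisenWhenAreEnsembles2023}) is stated in.
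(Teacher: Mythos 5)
First, a framing note: the paper does not actually prove this theorem — Appendix \ref{app:formalism} states the competence assumption and then records the result as proved by \citet{theisenWhenAreEnsembles2023} — so your proposal is being measured against the standard argument from that reference rather than anything in this paper. Your overall strategy is that standard argument, correctly identified: reduce the claim to $\mathbb{E}[W_\rho] \geq P(W_\rho \geq 1/2) \geq L_{\mathcal{D}}(h_{\mathrm{MV}})$, expand $\mathbb{E}[W_\rho]=\int_0^1 P(W_\rho\geq s)\,ds$ by the layer-cake formula, split at $1/2$, change variables $s\leftrightarrow 1-t$ in the upper half so that $\int_{1/2}^1 P(W_\rho\geq s)\,ds = \tfrac12 P(W_\rho\geq 1/2) - \int_0^{1/2}P(W_\rho\in[1/2,1-t])\,dt$, and then use competence to bound the subtracted term by $\int_0^{1/2}P(W_\rho\in[t,1/2))\,dt$, which cancels against the lower half and leaves exactly $P(W_\rho\geq 1/2)$. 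Your closing description of the bookkeeping is this computation, and your treatment of ties and of the $0$--$1$ loss identity $\mathbb{E}_{h\sim\rho}[L_{\mathcal{D}}(h)]=\mathbb{E}[W_\rho]$ is fine.

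There is, however, one genuinely false intermediate claim that you should excise: the assertion that the upper piece satisfies $\int_{1/2}^1 P(W_\rho \geq s)\,ds \geq \int_{1/2}^1 P(W_\rho > 1/2)\,ds = \tfrac12 P(W_\rho > 1/2)$, so that it would suffice to show the lower piece also exceeds $\tfrac12 P(W_\rho>1/2)$. For $s>1/2$ the event $\{W_\rho\geq s\}$ is \emph{contained in} $\{W_\rho>1/2\}$, so the integrand on the left is pointwise no larger than the one on the right and the inequality runs the wrong way. It fails even for competent ensembles: if $W_\rho$ takes the values $0.49$ and $0.51$ with probability $\tfrac12$ each, the ensemble is competent, yet $\int_{1/2}^1 P(W_\rho\geq s)\,ds = 0.005$ while $\tfrac12 P(W_\rho>1/2)=0.25$. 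The upper piece can thus be nearly negligible, and the whole point of the competence assumption is to transfer the missing mass from the lower piece — which is exactly what your final cancellation argument does, without ever needing the false bound. Deleting that sentence and writing out the substitution-plus-cancellation explicitly yields a complete and correct proof.
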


This assumption is only required to rule out pathological cases. For most real-world examples, this will be trivially satisfied. In the case of binary classification, the bounds on EIR can be simplified to \eqref{eq:original-theorem} from the main text.

\section{Ablation: Ensemble Sizes} \label{app:ensemble-size}
\begin{figure}
    \centering
    \includegraphics[width=0.9\linewidth]{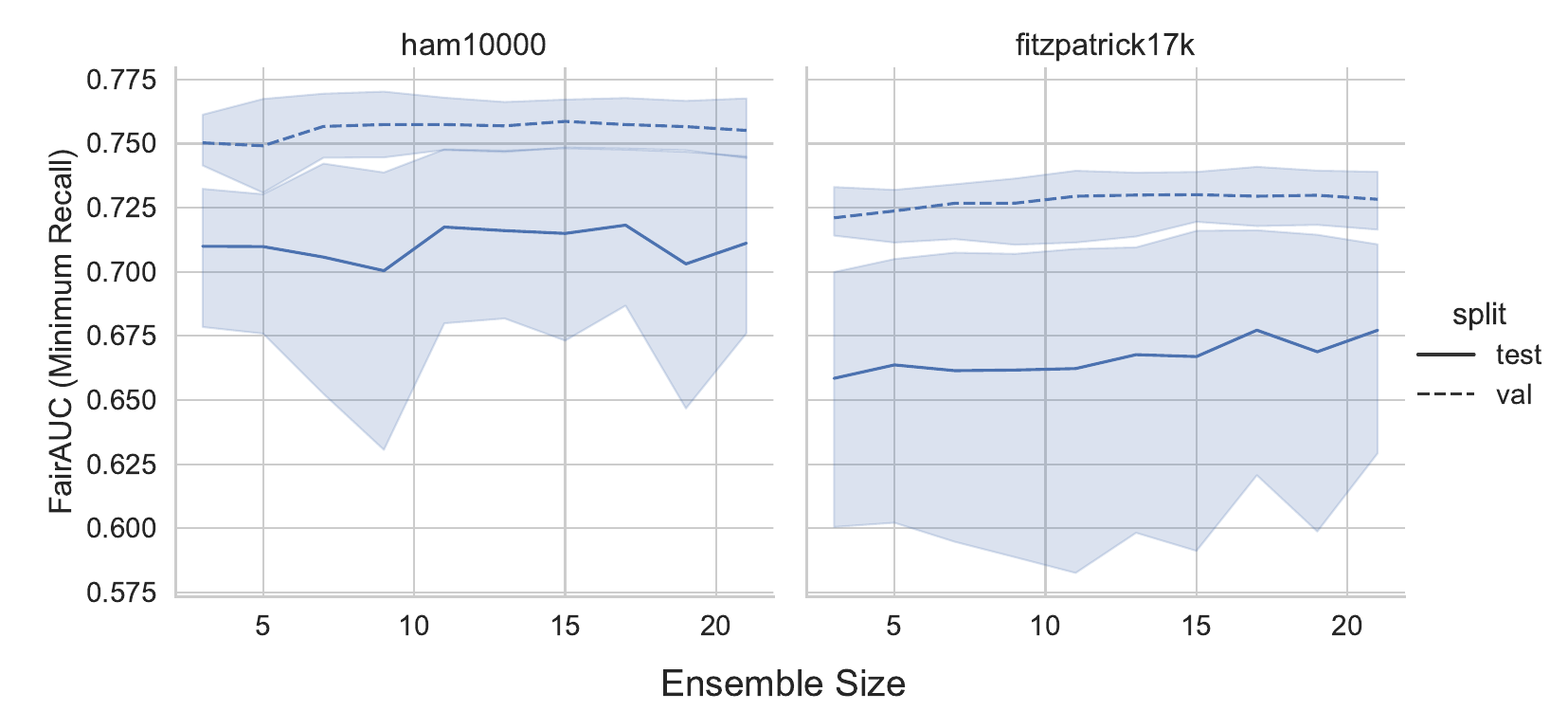}
    \caption{Relationship between \textbf{Ensemble Size} (X-axis) and \textbf{FairAUC} (Y-axis) across two datasets. No significant relationship is observed.}
    \label{fig:ensemble-size-fair-auc}
\end{figure}

In this section, we ask: ``How does ensemble size affect performance?'' We examine how $\FairAUC$ varies with ensemble size on the test set, and whether validation performance predicts test performance.  

Our design makes this straightforward: because ensemble members are trained independently, we can form smaller ensembles by subsampling members. We construct ensembles of size $m \in \{3,5,\dots,M\}$ with $M=21$, and compute $\FairAUC$ on both validation and test sets for HAM10000 \citep{tschandlHAM10000DatasetLarge2018} and Fitzpatrick17k \citep{grohEvaluatingDeepNeural2021} across all train/test partitions.  

\Figref{fig:ensemble-size-fair-auc} shows no consistent trend: confidence intervals are wide, and performance does not vary systematically with ensemble size. An alternative heuristic is to use validation $\FairAUC$ to select ensemble size, but as \Figref{fig:ensemble-size-validation} shows, the relationship between validation and test performance is too noisy to be useful. This is expected, as our method already leverages all non-test data to fit fairness weights.  

Lacking a strong empirical heuristic, we adopt the largest ensemble ($M=21$), which best aligns with our theoretical results: larger ensembles provide stronger guarantees under Jury-theorem arguments (see \secref{sec:theory:min-recall}).  

\begin{figure}
    \centering
    \includegraphics[width=0.9\linewidth]{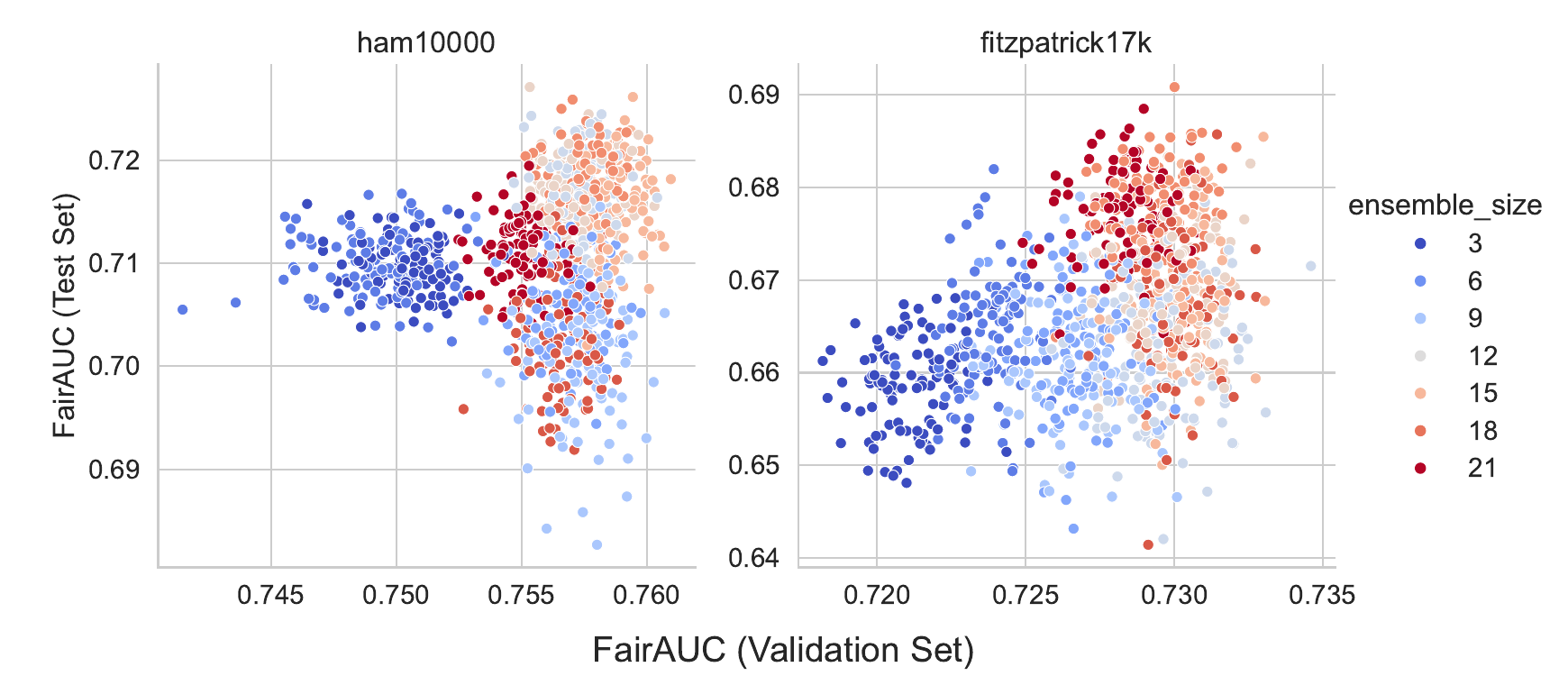}
    \caption{Relationship between $\FairAUC$ on validation (X-axis) and test set (Y-axis) across ensemble sizes. The relationship is too noisy to guide model selection.}
    \label{fig:ensemble-size-validation}
\end{figure}

\section{Empirical Validation of Competence} \label{app:competence-validation}
\begin{figure}
    \centering
    \includegraphics[width=\textwidth]{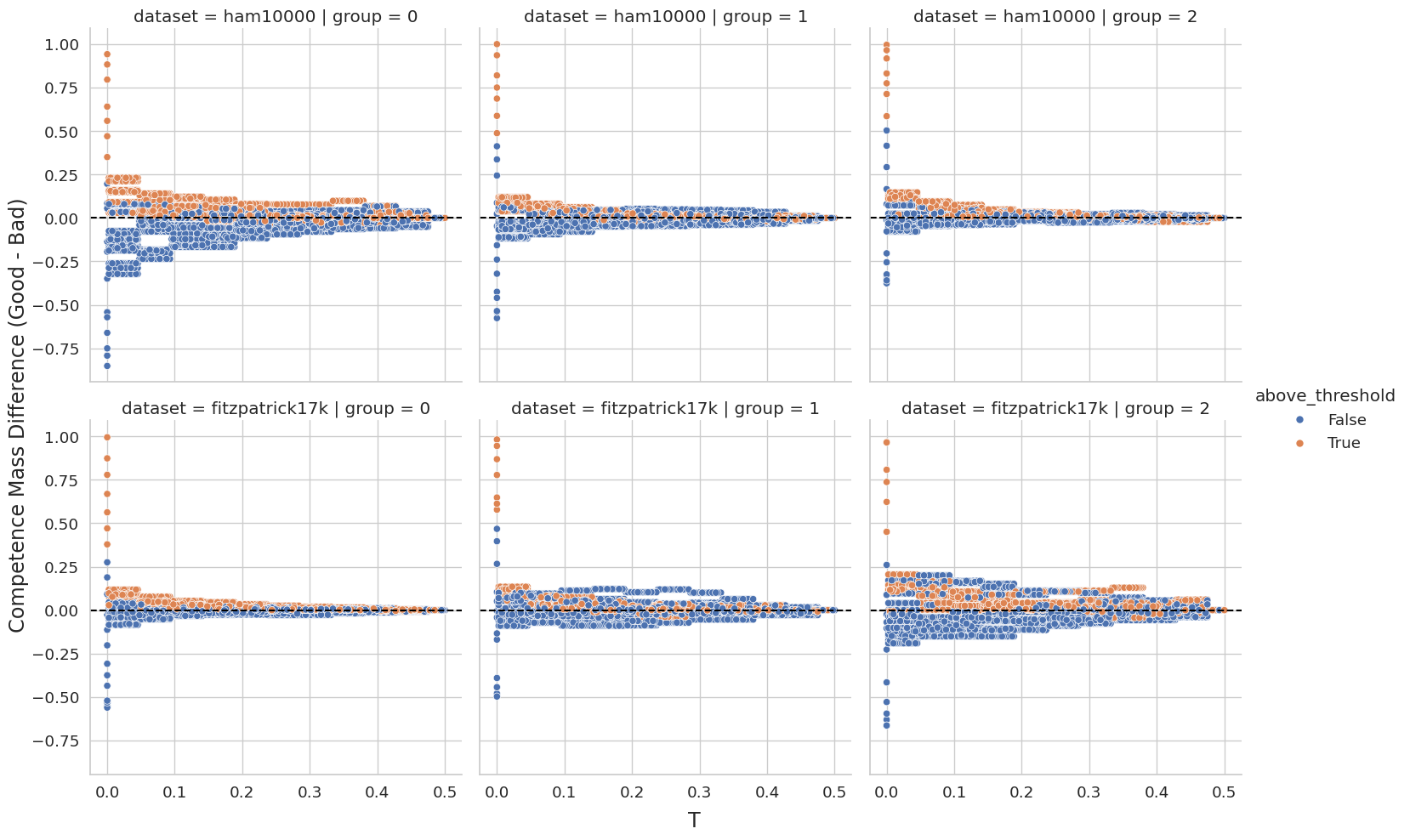}
\caption{\textbf{Empirical validation of competence proofs}. We show that enforcing minimum recall, $k>0.5+\delta$, leads to \textit{competent} ensembles (see \secref{sec:theory}). $\delta$ depends on the data size (\secref{sec:theory:sizes}) and $0.5$ comes from our proof in \secref{sec:theory:min-recall}. The data points \emph{above} thresholds, are above the X-axis, whereas the points \emph{below} the threshold are on both sides.}
    \label{fig:comptence-empirical}
\end{figure}

We empirically validate our proofs from \secref{sec:theory:sizes} and \secref{sec:theory:min-recall}. Specifically, we want to show that enforcing recall at $k>0.5+\delta$ leads to competent ensembles if $\delta$ is matches the size of the datasets. This would help validate both theoretical extensions of \citet{theisenWhenAreEnsembles2023}. 

To conduct this analysis, we set $\text{threshold}=k+\delta=0.7$ (as described in Appendix \ref{app:implementation}). We then run the competence calculations from \citet{theisenWhenAreEnsembles2023} for different $k$ above and below the threshold. The resulting figure is \Figref{fig:comptence-empirical}. 

\section{Benchmarking Efficiency} \label{app:efficiency}
A big advantage of our \FairEnsemble method is that it is efficient for training and inference because it utilises a shared backbone (see \secref{sec:ensemble}). In this section, we provide evidence for these claims. 

The results for inference can be seen in Table \ref{tab:single_image_latency}. Here, we see comparable inference speeds for ERM and ensemble across both CPU and GPU. The GPU runs are done on an NVIDIA H100 80GB GPU. The runs are with a batch size of 1, averaged over 100 runs, with a warm-up size of 10. There are no significant differences between the methods.

The results for training can be seen in Table \ref{tab:runtime_comparison} based on Weights \& Biases data \citep{wandb}. Here, we see a larger difference; ensembles take approximately 3x longer to train compared to ERM. This may be because we are in essence training 84 times more classifiers (21 members with four heads each). Still, because of the small size of the datasets, the training times are manageable.

It is worth noting that substantial optimisation is available for training. Because the backbone is frozen, the entire evaluation set (validation sets + test set) can be pre-computed. This would drastically speed up the training. However, these optimisations were not done in the interest of time.

\begin{table}[ht]
\centering
\caption{Average training runtime (in minutes)}
\begin{tabular}{lcc}
\toprule
Training Method & Avg. Runtime (min) & Std. Dev. (min) \\
\midrule
Ensemble & 31.79 & 5.13 \\
ERM      & 8.51  & 2.28 \\
\bottomrule
\end{tabular}

\label{tab:runtime_comparison}
\end{table}

\section{Derivations}
\subsection{Restricted Groupwise Competence under minimum recall and Independence Assumptions}
\label{blah}
To prove this, we assume independence of classifier errors and define $I_p$ for any subset of classifiers $p \in \rho$: 
\begin{equation}
    I_p(x) =\Pi_{i\in p} P(c_i(x)=1)\Pi_{j\in \bar p}P(c_j(x)=0)
\end{equation}
 then we decompose
\begin{equation}
    P(W^{g+}_\rho = t) = \sum_{\substack{p \in \rho \\ |p|=s}}I_p
\end{equation}
\paragraph{Sketch of the proof:}
The proof requires two observations:
\begin{enumerate}
    \item {Negative flips decrease probabilities}(given by Lemma \ref{flip}) Given a subset $p$ of ensemble models taking positive labels, with their complement taking negative labels, flipping some of $p$ so they also take negative labels to obtain a new $q$ subset will result in $q$ having a lower probability of occurring than $p$.
    \item{Matching $p$s and $q$s}(given by Lemma \ref{matchingpq}) It is possible to identify matching pairs of such $p$ of size $s$ and $q$ of size $|\rho|-s$ in equation \eqref{eq:scd-proof} determine.
\end{enumerate}
\begin{lemma}
\label{flip}
If $p\supseteq q$, the following inequality holds for their associated summands:
\begin{equation}
I_p\geq I_q    
\end{equation}
\end{lemma}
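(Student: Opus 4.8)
The plan is to reduce the claim to a single-classifier flip and then iterate. Observe that the summands $I_p$ and $I_q$ are products over all ensemble members of per-classifier probabilities: for $p$, the members in $p$ contribute factors $P(c_i(x)=1)$ and those in $\bar p$ contribute $P(c_j(x)=0)$; likewise for $q$ with $\bar q \supseteq \bar p$. Since $p \supseteq q$, the set difference $p \setminus q$ consists of members that are ``positive'' in $I_p$ but ``negative'' in $I_q$, while every other member contributes an identical factor to both products. Hence the ratio $I_q / I_p$ (when $I_p > 0$) equals $\prod_{i \in p \setminus q} \frac{P(c_i(x)=0)}{P(c_i(x)=1)}$.

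The key step is then the pointwise bound that for each such member $i$, we have $P(c_i(x)=1) \geq P(c_i(x)=0)$, equivalently $P(c_i(x)=1) \geq 1/2$. This is exactly where the hypothesis of the theorem enters: we are in the regime of an enforced minimum recall $k > 1/2$ per ensemble member, so on positives $(A=g, Y=1)$ each member predicts the positive label with probability at least $k > 1/2$. Therefore every factor in the product $\prod_{i \in p \setminus q} \frac{P(c_i(x)=0)}{P(c_i(x)=1)}$ is at most $1$, giving $I_q \leq I_p$. The degenerate case $I_p = 0$ forces $I_q = 0$ as well (since $\bar q \supseteq \bar p$ and an additional zero factor can only appear), so the inequality holds trivially there.

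I would present this cleanly by first handling the atomic case $|p \setminus q| = 1$ — flipping one classifier from positive to negative multiplies the summand by $P(c_i(x)=0)/P(c_i(x)=1) \leq 1$ — and then obtaining the general case by chaining a finite sequence of such atomic flips from $p$ down to $q$, each non-increasing.

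The main obstacle is not the algebra but making the role of the recall hypothesis explicit and uniform: the inequality $P(c_i(x)=1) \geq 1/2$ must hold for \emph{every} member $i$ being flipped and for the relevant conditional distribution (here $\mathcal{D}_{g,+}$), which is precisely what the ``minimum recall $> 50\%$ for every member'' assumption supplies. One should also be careful that ``probability'' here refers to the randomness in the classifier (or, under the deterministic reading, to the fraction of the fold/population), so that $P(c_i(x)=1) + P(c_i(x)=0) = 1$ genuinely holds and the ratio manipulation is valid; stating this convention up front removes the only real subtlety.
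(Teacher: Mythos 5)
Your proposal is correct and follows essentially the same route as the paper: both arguments isolate the members in $p\setminus q$ that flip from a positive to a negative label and use the enforced minimum recall $k\geq 1/2$ on the positives to conclude $P(c_i(x)=1)\geq k\geq 1-k\geq P(c_i(x)=0)$ for each flipped member, so the product can only decrease. Your ratio formulation (with the explicit $I_p=0$ degenerate case and the single-flip iteration) is just a cosmetic repackaging of the paper's direct product inequality.
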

\begin{proof}
To see this, we write $n=\bar p$ for the members of the ensemble that take a negative label in both $p$ and $q$ and $a=p/q$ for members of the ensemble that alter from positive label to negative as we  move from $p$ to $q$.
Then 
\begin{equation}
    \Pi_a(c_a(X)=1)\geq k^{|a|} \geq (1-k)^{|a|} \geq \Pi_a(c_a(X)=0)
\end{equation}
and
\begin{align}
    \Pi_a(c_a(X)=1)\Pi_p P(c_p(X)=1)\Pi_nP(c_n(X)=0)\geq \nonumber \\
    \Pi_a(c_a(X)=0)\Pi_p P(c_p(X)=1)\Pi_n P(c_n(X)=0) 
    \end{align}
As required. 
\end{proof}
\begin{lemma}
\label{matchingpq}
Now we need to establish the existence of a monotonic bijection $m$ that maps from sets of size $s$ to sets of size $|\rho|-s$ such that if $m(p)=q$ then $p\supset q$.
\end{lemma}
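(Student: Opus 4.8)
The plan is to construct the bijection $m$ explicitly by imposing a fixed total order on the ensemble members, say $c_1 \prec c_2 \prec \dots \prec c_N$ with $N = |\rho|$, and then using this order to pick out a canonical ``smaller'' set of size $N-s$ inside each set $p$ of size $s$. The natural choice, when $s \ge N-s$ (i.e.\ $s \ge N/2$, which is the regime of interest since we need a majority of positive votes to produce a positive prediction), is to let $m(p)$ be the subset consisting of the $N-s$ largest elements of $p$ with respect to $\prec$. Since $|m(p)| = N-s \le s = |p|$ and $m(p) \subseteq p$ by construction, the containment condition $p \supseteq m(p)$ required by Lemma~\ref{flip} holds automatically, so each summand $I_{m(p)}$ is dominated by $I_p$.

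First I would verify that $m$ is well-defined: every $p$ with $|p| = s \ge N-s$ contains at least $N-s$ elements, so ``the $N-s$ largest elements of $p$'' is unambiguous and yields a set of exactly the right size. Next I would check injectivity. Two distinct sets $p \neq p'$ of size $s$ could in principle have the same image; to rule this out I would argue that $p$ can be recovered from $q = m(p)$ together with a small amount of bookkeeping — namely, $p$ is obtained from $q$ by adding back the $s - (N-s) = 2s - N$ elements that were ``flipped'', and these are constrained to be smaller (in $\prec$) than $\min_\prec q$. In fact the cleanest route is to fix which flipped elements are used: choose them to be the $2s-N$ smallest elements overall that are not already forced, or more simply, restrict attention to the subfamily of $p$'s of a given shape. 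Here I expect the statement as phrased (``a monotonic bijection'') is slightly loose: $m$ need not be a bijection between \emph{all} size-$s$ sets and \emph{all} size-$(N-s)$ sets, but rather one can pair up the terms in the relevant sum so that every size-$(N-s)$ summand $I_q$ is matched to at least one size-$s$ summand $I_p$ with $I_p \ge I_q$, which is what the competence argument actually needs. I would therefore state and prove the matching in that weaker but sufficient form: an injection from the collection of ``bad'' configurations ($W^{g+}_\rho \in [1/2, 1-t]$) into the collection of ``good'' configurations ($W^{g+}_\rho \in [t, 1/2)$) that only increases the associated probability mass.

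The monotonicity requirement — ``if $m(p) = q$ then $p \supset q$'' with $m$ order-preserving — I would interpret as: larger sets (more positive votes) map to smaller complements in a way compatible with the subset order, which the ``keep the top $N-s$ elements'' rule satisfies, since enlarging $p$ can only enlarge or preserve its top-$(N-s)$ slice. Combining Lemma~\ref{flip} (each matched pair satisfies $I_p \ge I_q$) with this matching then gives $P(W^{g+}_\rho \in [1/2, 1-t]) \le P(W^{g+}_\rho \in [t,1/2))$, i.e.\ $C^{g+}_\rho(t) \ge 0$, once the minimum recall exceeds $1/2$ and errors are independent.

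The main obstacle I anticipate is making the matching genuinely injective rather than merely a domination of sums: when $2s - N > 1$ there are many size-$s$ sets collapsing onto structurally related images, and one must commit to a deterministic, reversible rule for which elements get flipped (e.g.\ always flip the $\prec$-smallest eligible members, and only pair configurations of equal size $s$ with configurations of the complementary size $N-s$ where $s$ ranges over $\{\lceil N/2\rceil, \dots, N\}$ paired with $\{0,\dots,\lfloor N/2\rfloor\}$ respectively). Getting the index bookkeeping exactly right — ensuring the map is defined on the correct index set, is invertible on its image, and respects the $\prec$-order so that Lemma~\ref{flip} applies to every matched pair — is the delicate part; the inequality $k > 1/2 \Rightarrow k^{|a|} \ge (1-k)^{|a|}$ that powers Lemma~\ref{flip} is then the easy ingredient that closes the argument.
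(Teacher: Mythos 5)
Your core construction does not work: the map that sends a size-$s$ set $p$ to its $N-s$ largest elements is not injective, hence not a bijection. Take $N=3$, $s=2$: both $\{1,3\}$ and $\{2,3\}$ map to $\{3\}$, while $\{1\}$ is never hit. The sum comparison then fails, because $\sum_{|p|=s} I_{m(p)}$ counts $I_{\{3\}}$ twice and $I_{\{1\}}$ not at all, and there is no reason why $2I_{\{3\}} \geq I_{\{1\}} + I_{\{3\}}$ should hold (it fails whenever classifier $1$ has higher recall than classifier $3$). You correctly sense this problem and correctly reformulate what is actually needed --- an injection from the size-$(N-s)$ level into the size-$s$ level respecting containment, which is automatically a bijection since $\binom{N}{s}=\binom{N}{N-s}$ --- but you then leave the construction of that injection as ``delicate bookkeeping'' rather than supplying it. That missing step is the entire content of the lemma: no greedy, order-based rule of the kind you sketch produces it, and its existence is a genuine combinatorial theorem rather than an exercise in indexing.

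The paper closes exactly this gap by invoking a symmetric chain decomposition of the Boolean lattice (Greene--Kleitman): each chain is totally ordered by inclusion and contains exactly one set of each size between $N-s$ and $s$, so pairing the unique size-$s$ member of each chain with the unique size-$(N-s)$ member of the same chain yields the monotone bijection the lemma asserts. If you prefer a self-contained route, note that the bipartite containment graph between levels $s$ and $N-s$ is biregular on two equinumerous vertex sets, so Hall's condition is satisfied and a perfect matching respecting containment exists; the explicit ``bracketing'' construction of symmetric chains would also do. Any of these repairs your argument; as written, the proposal has a genuine gap at its central step.
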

\begin{proof}
This follows from the existence of symmetric chain decomposition (see \citet{greeneStructureSpernerKfamilies1976} for details).

A Symmetric Chain (SC) is a symmetric chain, that is, a chain
\[
A_0 \subset A_1 \subset \dots \subset A_t
\]
in the Boolean lattice $\mathcal{B}_n$ whose ranks satisfy
\[
|A_0| + |A_t| = n,
\]
so the chain begins at rank $k$ and ends at rank $n-k$, increasing in size by one at each step.

A Symmetric Chain Decomposition (SCD) is a  decomposition of $\mathcal{B}_n$, that is, a partition of the lattice into pairwise disjoint symmetric chains whose union contains every subset of $\{1,\dots,n\}$

By definition, every SC can only include one point of any size, and any SC that includes a point of size $k$ also includes a point of size $n-k$. As an SCD provides disjoint cover of the hypercube, every point of size $k$ is part of a single chain. Each of chain contains only one point of size $n-k$, and as such any SCD defines a monotonic bijection from points of size $k$ to points of size $n-k$. 
\end{proof}
\subsubsection{Proof}
Let $k\geq 0.5$ be the minimum recall rate. We will prove  a stronger statement that for each $t\in [0,0.5]$:
\begin{equation}
    P(W^{g+}_\rho = t) \geq P(W^{g+}_\rho  = 1-t) \forall g \in \cal G
\end{equation} 
For individual datapoints, unless $t=\frac s {|\rho|}$ for some integer $s<|\rho|/2$, the equation trivially holds as left and right side of the equation are both $0$.

When $t=\frac s {|\rho|}$, the above statement is equivalent to the probability of exactly $s\leq 0.5 |\rho|$ members of the ensemble voting correctly is higher than the probability of exactly $s$ members voting incorrectly.

We will establish a bijective correspondence between each summand to a smaller summand in the expression  
\begin{equation}
    P(W^{g+}_\rho = 1-t) = \sum_{\substack{p \in \rho \\|q|=|\rho|-s}}I_{q}
\end{equation}
By application of Lemma 2, followed by Lemma 1 we can rewrite:
\begin{equation} \label{eq:scd-proof}
    P(W^{g+}_\rho = 1-t) = \sum_{\substack{|q|=|\rho|-s\\ \forall s\leq |\rho|/2} }I_{q}=\sum_{\substack{q=m(p)\\|p|=s\\ \forall s\leq |\rho|/2}}I_q\leq \sum_{\substack{|p|=s\\ \forall s\leq |\rho|/2}}I_p =P(W^{g+}_\rho = t) \forall g \in \cal G 
\end{equation}

as required. \hfill\qed
\subsection{Minimum validation and evaluation sizes} \label{app:minimum-sizes}
\paragraph{Statistical Framework:}
We can frame the problem of ensuring minimum recall as a one-sided hypothesis test:

\begin{equation}
H_0: p_{\text{val}} = p_{\text{test}} = k
\quad\text{vs.}\quad
H_A: p_{\text{val}} > k.    
\end{equation}

Where $p_{\text{val}}$ is our threshold of interest. Because both the test set and validation sets are small, they both introduce sampling variability. Thus, we will explicitly account for the size of both. 

The hypothesis-testing framework has a few assumptions. First, it assumes that the validation and test sets are \textit{independently} drawn from the same distribution (an assumption we explicitly follow; see \secref{sec:data}). Second, it assumes that each positive instance is an independent \textbf{Bernoulli trial} that is either a true positive or a false negative. Finally, it assumes an approximately normal distribution. The normality assumption is met by the \textit{Large Counts Condition}, which heuristically states that $\min(mk, m(1-k), nk, n(1-k)) \ge 10$, which in our case simplifies to $\min(\frac{m}{2},\frac{n}{2}) \ge 10$. We thus minimally need roughly \textbf{20} positive instances for each group in both test and validation sets. 

\paragraph{Deriving minimums:}
Under $H_0$, the standard error of the difference between the minimum recall proportions in the validation and test set is:
$$
\text{SE}_0 = \sqrt{\,k(1-k)\Bigl(\tfrac{1}{m}+\tfrac{1}{n}\Bigr)}.
$$
The one-sided $z$ statistic fixing $p_{test}=k$ is
\[
z = \frac{p_{\text{val}} - k}{\text{SE}_0}.
\]
Requiring a significance level of $\alpha$ (i.e., $z \geq z_{1-\alpha}$) yields the minimal observable validation recall:

\[
p_{\min} = k + z_{1-\alpha}\sqrt{\,k(1-k)\Bigl(\tfrac{1}{m}+\tfrac{1}{n}\Bigr)}.
\]

For $k=0.5$, this simplifies to the result in Eq. \ref{eq:minimum-observed-recall}.

\subsection{Derivation of Equal Opportunity Bounds}\label{app:deo-derivation}

We derive the fairness bounds for ensembles under approximate equal opportunity (or accuracy) constraints.

Starting from the definition of $k'$-approximate fairness for the ensemble, we have
\begin{align}
    k'&=\max_{g\in\mathcal{G}}\mathbb{E}_{h\sim \rho}[L_g(h)](1-\text{EIR}_{g^*}) - \min_{g\in\mathcal{G}}\mathbb{E}_{h\sim \rho}[L_g(h)](1-\text{EIR}_{g^*})\\
    &\leq \max_{g\in\mathcal{G}}\mathbb{E}_{h\sim \rho}[L_g(h)] - \min_{g\in\mathcal{G}}\mathbb{E}_{h\sim \rho}[L_g(h)](1-\text{EIR}_{g^*})\\
    &\leq k- \min_{g\in\mathcal{G}}\mathbb{E}_{h\sim \rho} [L_g(h)]\cdot(-\text{EIR})_{g^*} \\
    &\leq k + \max_{g\in\mathcal{G}}\mathbb{E}_{h\sim \rho} [L_g(h)]\text{DER}_{g^*}
\end{align}

where $g^*$ is an appropriate distribution (e.g., positives, negatives or all points) constrained to a particular group $g$. By substituting in the lower bound from Theorem 2 instead of 0, we obtain the slightly tighter bound of Equation \ref{eq:deo-bounds}.

\section{Detailed Related Work} \label{app:related}

\paragraph{Fairness in Medical Imaging:}
Deep learning-based computer vision methods have become highly popular for medical imaging applications \citep{caiReviewApplicationDeep2020}, yet despite achieving near-human performance on top-level metrics \citep{liuDeepLearningSystem2020}, they consistently underperform for marginalised groups \citep{xuAddressingFairnessIssues2024,kocakBiasArtificialIntelligence2024}. These biases persist across different domains and modalities from dermatology \citep{daneshjouDisparitiesDermatologyAI2022} to chest X-rays \citep{seyyed-kalantariUnderdiagnosisBiasArtificial2021} and retinal imaging \citep{coynerAssociationBiomarkerbasedArtificial2023}. For instance, there is pervasive bias in skin condition classification \citep{oguguoComparativeStudyFairness2023,daneshjouDisparitiesDermatologyAI2022,grohEvaluatingDeepNeural2021}, likely due to both bias in data collection \citep{drukkerFairnessArtificialIntelligence2023} and treatment procedures \citep{obermeyerDissectingRacialBias2019}.

Unfairness arise from different stages in the development process \citep{drukkerFairnessArtificialIntelligence2023}. One persistent issue is unbalanced datasets \citep{larrazabalGenderImbalanceMedical2020}. Unbalanced datasets can lead to insufficient support for disadvantaged groups, which can lead to worse representations and more uncertain results \citep{riccilaraUnravelingCalibrationBiases2023,mehtaEvaluatingFairnessDeep2024}. 

A successful approach to mitigating fairness is to do extensive hyperparameter and architecture search \citep{duttFairTuneOptimizingParameter2023,dooleyImportanceArchitecturesHyperparameters2022}. By jointly optimising for fairness and performance, these methods can reduce the generalisation gap and outperform other methods. However, because of their computational cost, we do not compare against these in this work. However, our method can be built on top of the backbones found by the architecture search.

Defining fairness in the context of medical imaging is another challenge. While traditional fairness metrics, like equal opportunity \citep{hardtEqualityOpportunitySupervised2016}, are concerned with minimising disparities between groups, this might not be appropriate in a medical context. For instance, Zhang et al. \citep{zhangImprovingFairnessChest2022} find that methods which optimise this notion of group performance reduces the performance of all groups. This phenomenon of `levelling down' \citep{zietlowLevelingComputerVision2022} can have fatal consequences for patients and not meet the legal standards of fairness \citep{mittelstadtUnfairnessFairMachine2024}. Instead, researchers should strive to enforce minimum rate constraints, i.e., the performance of the worst-performing groups, which can help reduce persistent problems of underdiagnosis and undertreatment of disadvantaged groups \citep{seyyed-kalantariUnderdiagnosisBiasArtificial2021}.

\section{Alternative Backbones} \label{app:mobilenetv3}
\begin{figure}[!htbp]
    \centering
    \subfigure[Fitzpatrick17k\label{fig:fitzpatrick-improvements-mobilenetv3}]{
        \includegraphics[width=0.45\textwidth]{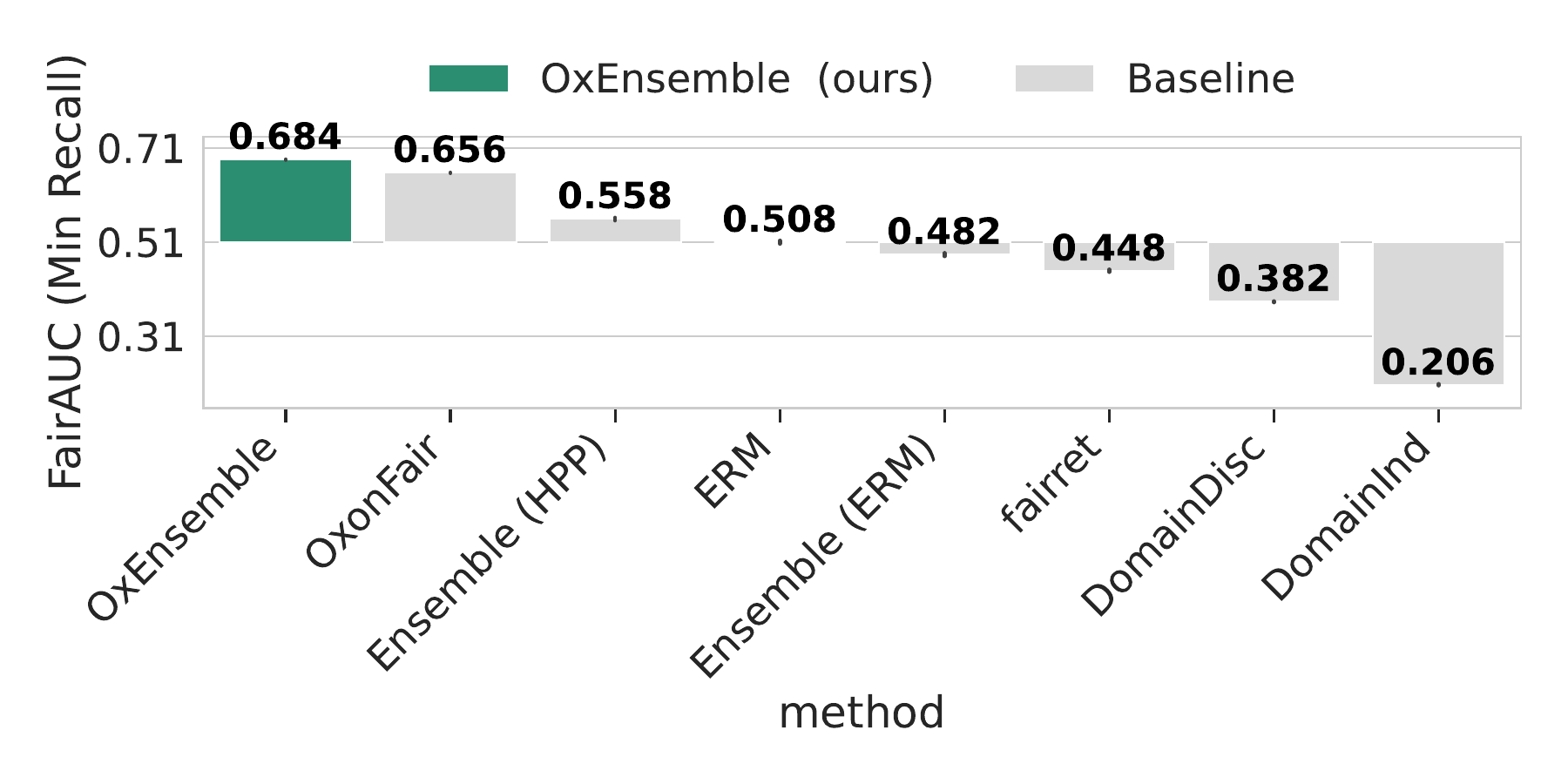}
    }\hfill
    \subfigure[HAM10000\label{fig:ham10000-improvements-mobilenetv3}]{
        \includegraphics[width=0.45\textwidth]{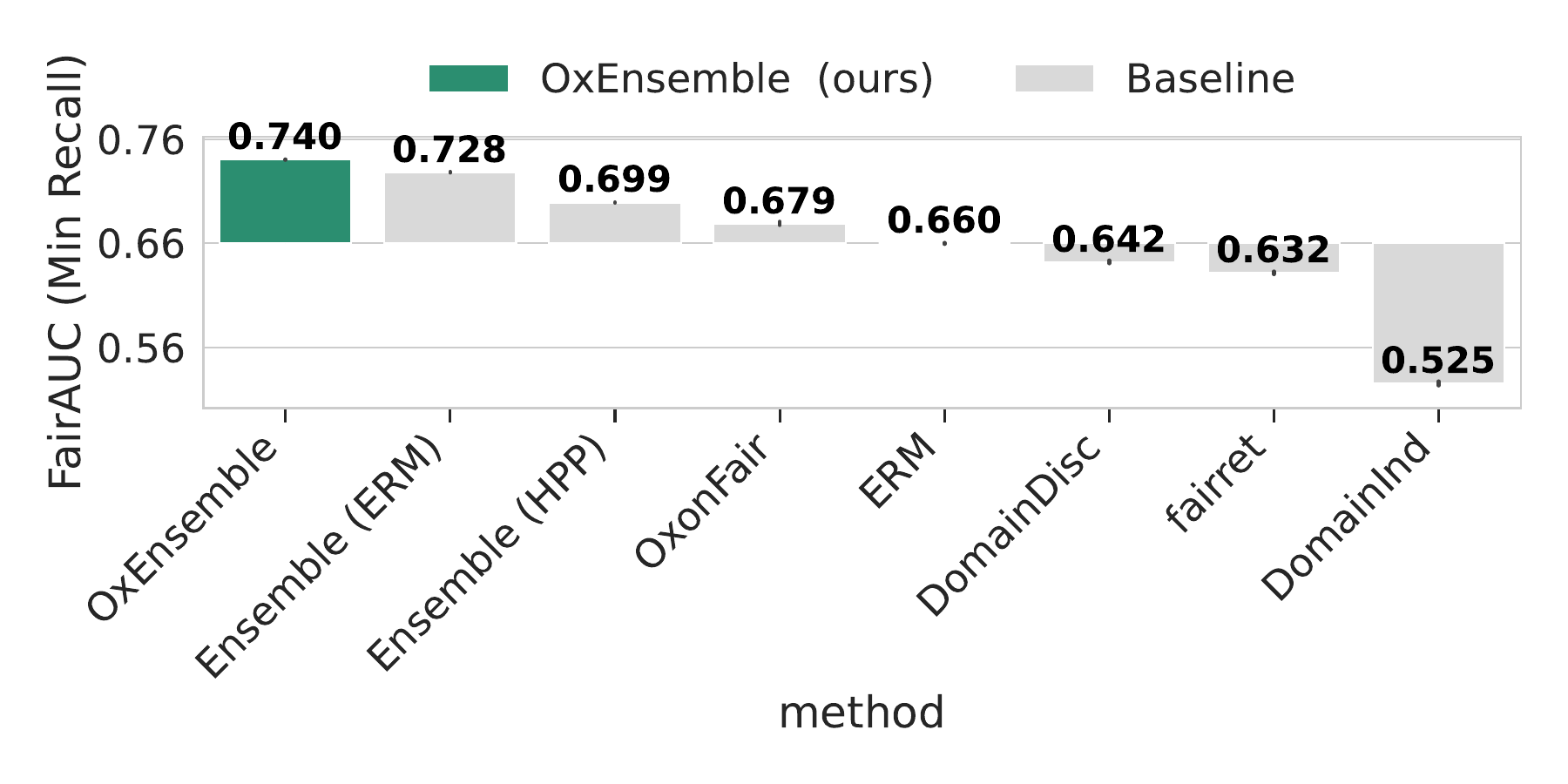}
    }
    \caption{\textbf{Fairness–accuracy AUC (FairAUC) relative to ERM with Mobilenetv3 backbone}. \FairEnsemble outperforms all baselines on Fitzpatrick17k (left) and HAM10000 (right). Error bars show 95\% bootstrap CIs. Evaluation follows \secref{sec:evaluation} over minimum-recall thresholds in $[0.5,1]$.}
    \label{fig:combined-improvements-mobilenetv3}
\end{figure}
Here, we report the experiments conducted with a different backbone to show the robustness of our method. Specifically, we use the very small MobileNetv3 \citep{howardSearchingMobileNetV32019}, which is popular for on-edge devices. 

\Figref{fig:combined-improvements-mobilenetv3} shows the main results. It shows that \FairEnsemble convincingly outperforms all baselines on both HAM10000 and Fitzpatrick17k -- the same results as for efficientnet in the main text.

\end{document}